\theoremstyle{definition}
\newtheorem{defi}{Definition}
\theoremstyle{plain}
\theoremstyle{remark}
\definecolor{sky0}{HTML}{375E97}
\definecolor{sky1}{HTML}{5276AA}
\definecolor{sky2}{HTML}{1E4987}
\definecolor{sunset0}{HTML}{FB6542}
\definecolor{sunset1}{HTML}{FF8366}
\definecolor{sunset2}{HTML}{EF3C11}
\definecolor{sunflower0}{HTML}{FFBB00}
\definecolor{sunflower1}{HTML}{FFCA39}
\definecolor{sunflower2}{HTML}{C69100}
\definecolor{grass0}{HTML}{3F681C}
\definecolor{grass1}{HTML}{578134}
\definecolor{grass2}{HTML}{264809}
\definecolor{purple0}{HTML}{6a253c}
\definecolor{purple1}{HTML}{792a45}
\definecolor{purple2}{HTML}{88304d}
\definecolor{purple3}{HTML}{973556}
\definecolor{purple4}{HTML}{a14967}
\definecolor{purple5}{HTML}{ac5d78}
\definecolor{purple6}{HTML}{b67289}
\definecolor{purple7}{HTML}{c1869a}
\definecolor{purple8}{HTML}{cb9aab}
\definecolor{purple9}{HTML}{d5aebb}
\definecolor{purple10}{HTML}{e0c2cc}
\definecolor{purple11}{HTML}{ead7dd}
\definecolor{purple12}{HTML}{f5ebee}
\definecolor{green0}{HTML}{10423d}
\definecolor{green1}{HTML}{124b44}
\definecolor{green2}{HTML}{14534c}
\definecolor{green3}{HTML}{2c645e}
\definecolor{green4}{HTML}{437570}
\definecolor{green5}{HTML}{5b8782}
\definecolor{green6}{HTML}{729894}
\definecolor{green7}{HTML}{8aa9a6}
\definecolor{green8}{HTML}{a1bab7}
\definecolor{green9}{HTML}{b9cbc9}
\definecolor{green10}{HTML}{d0dddb}
\definecolor{green11}{HTML}{e8eeed}
\definecolor{gold0}{HTML}{9f7e35}
\definecolor{gold1}{HTML}{b6903d}
\definecolor{gold2}{HTML}{cca244}
\definecolor{gold3}{HTML}{e3b44c}
\definecolor{gold4}{HTML}{e6bc5e}
\definecolor{gold5}{HTML}{e9c370}
\definecolor{gold6}{HTML}{ebcb82}
\definecolor{gold7}{HTML}{eed294}
\definecolor{gold8}{HTML}{f1daa6}
\definecolor{gold9}{HTML}{f4e1b7}
\definecolor{gold10}{HTML}{f7e9c9}
\definecolor{gold11}{HTML}{f9f0db}
\newcommandx{\fix}[2][1=]{\todo[linecolor=red,backgroundcolor=red!25,bordercolor=red,#1]{#2}}
\newcommandx{\discuss}[2][1=]{\todo[linecolor=blue,backgroundcolor=blue!25,bordercolor=blue,#1]{#2}}
\definecolor{gred}{rgb}{0.86,0.27,0.22}
\definecolor{gblue}{rgb}{0.26,0.52,0.96}
\definecolor{ggreen}{rgb}{0.06,0.62,0.35}
\icmltitlerunning{Topologically Densified Distributions}
\newcommand\drestr[2]{{
  \left.\kern-\nulldelimiterspace
  #1 
  \vphantom{\big|} 
  \right\|_{#2} 
 }}
 \newcommand\indc{\mfc}
 \newcommand\metric{\mathbb{d}}
 \newcommand\featext{\varphi}
 \newcommand\cls{\gamma}
 \newcommand\parasep{\,;\,}
\newcommand{\wrt}{w.r.t.~}
\newcommand{\ie}{i.e.}
\newcommand{\eg}{e.g.}
\newcommand{\fp}{\mathfrak{p}}
\newcommand{\fq}{\mathfrak{q}}
\DeclareMathOperator{\bb1}{\mathbb{1}}
\DeclareMathOperator*{\expect}{\mathbb{E}}
\newcommand{\N}{\mathbb{N}}
\newcommand{\mcL}{\mathcal{L}}
\newcommand{\mcR}{\mathcal{R}}
\newcommand{\mcX}{\mathcal{X}}
\newcommand{\mcY}{\mathcal{Y}}
\newcommand{\mcZ}{\mathcal{Z}}
\newcommand{\mfc}{\mathfrak{c}}
\newcommand{\ttn}{\texttt{n}}
\newcommand{\ttB}{\texttt{B}}
\newcommand{\mbfz}{\mathbf{z}}
\newcommand\restr[2]{{
  \left.\kern-\nulldelimiterspace 
  #1 
  \vphantom{\big|} 
  \right|_{#2} 
  }}
\DeclareMathOperator{\supp}{supp}
\DeclareMathOperator*{\argmax}{\arg\!\max}
\begin{document}

\newcommand{\papertitle}{Topologically Densified Distributions}

\twocolumn[

  \icmltitle{\papertitle}



  \icmlsetsymbol{equal}{*}

  \begin{icmlauthorlist}
    \icmlauthor{Christoph D. Hofer}{sbg}
    \icmlauthor{Florian Graf}{sbg}
    \icmlauthor{Marc Niethammer}{unc}
    \icmlauthor{Roland Kwitt}{sbg}
  \end{icmlauthorlist}

  \icmlaffiliation{sbg}{Department of Computer Science, Univ. of Salzburg,
    Austria}
  \icmlaffiliation{unc}{Univ. of North Carolina, Chapel Hill, USA}
  \icmlcorrespondingauthor{Christoph D. Hofer}{\texttt{chr.dav.hofer@gmail.com}}

  \icmlkeywords{Machine Learning, ICML}

  \vskip 0.3in
]



\printAffiliationsAndNotice{}  

\nobibliography* 

\begin{abstract}
  We study regularization in the context of small sample-size
  learning with over-parameterized neural networks. Specifically,
  we shift focus from architectural properties, such as norms on the
  network weights, to properties of the
  internal representations before a linear
  classifier. Specifically, we impose a topological
  constraint on samples drawn from the probability
  measure induced in that space. This provably leads
  to mass concentration effects around the representations
  of training instances, \ie, a property beneficial for
  generalization. By leveraging previous work to impose
  topological constraints in a neural network setting,  
  we provide empirical evidence (across various vision benchmarks) 
  to support our claim for better generalization.
\end{abstract}

\section{Introduction}
\label{section:introduction}

Learning neural network predictors for complex tasks typically requires
large amounts of data. Although such models are over-parameterized, they
generalize well in practice. The mechanisms that govern generalization
in such settings are still only partially understood \cite{CZhang2017a}.
Existing generalization bounds \cite{Bartlett17a,Neyshabur17a,Golowich18,Arora18a} offer deeper
insights, yet the vacuity of the bounds and their surprising behavior in terms
of sample size \cite{Nagarajan19a} is a lasting concern.

\begin{figure}
  \centering{
    \includegraphics[width=\columnwidth]{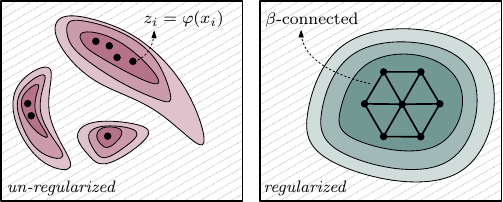}}
  \caption{Illustration of how topological regularization
  affects probability mass concentration in the internal 
  representation space of a neural network.  
  Darker shading denotes regions of higher probability mass. 
  \label{fig:title}}
  \vspace{-3pt}
\end{figure}

In \emph{small sample-size regimes}, achieving generalization is
considerably more challenging and, in general, requires
careful regularization, \eg, via various norms on the network
weights, controlling the Lipschitz constants, or adaptation 
and adjustment of the training data. The latter does not
exert regularization on parts of the function
implemented by the network, but acts on
the training data, \eg, by regularizing its internal representations.
Prominent examples include modern augmentation techniques \cite{Cubuk19a}, 
or mixing strategies \cite{Verma19b} to control overconfident
predictions. Not only do these approaches show remarkable practical success, but, to some extent, can also be legitimized formally, \eg, through \emph{flattening}
arguments in the representation space, or through variance reduction 
arguments, as in case of data augmentation \cite{Dao19a}.

In this work, we contribute a regularization approach that
hinges on \emph{internal representations}. 
We consider neural networks as a functional composition of the form
\begin{equation}
\cls  \circ \featext:
  \mathcal{X}\to \mathcal{Y} = \{1,\ldots,K\} \enspace,
  \label{eqn:network}
\end{equation}
where $\featext: \mathcal{X} \to \mathcal{Z}$ denotes a high-capacity
feature extractor which maps into an internal representation space 
$\mcZ$. A linear classifier $\cls: \mathcal{Z} \to 
\mathcal{Y}$ then predicts one of $K$ classes. 
As customary, $\gamma$ is typically of the form $Az+b$, followed
by the $\argmax$ operator. In our setting,  
we focus on the representation space $\mathcal{Z}$ and, 
specifically, on the \emph{push-forward
  probability measure} induced by $\featext$ on $\mathcal{Z}$. 
We then identify a \emph{property of this measure
that is beneficial to generalization}. This shifts attention away from
properties of the network and instead focuses on
its \emph{utility} to implement these properties
during learning, \eg, by means of regularization.
  

Our \textbf{contributions} are as follows: First, we formalize the intuition 
that in a neural network regime (where training samples are fitted perfectly), 
generalization is linked to probability mass concentration around the training 
samples in the internal representation space $\mcZ$. Second, we prove that a 
\emph{topological constraint} on samples from the aforementioned push-forward measure 
(restricted to each class), leads to mass concentration.
Third, relying on work by \citet{Hofer19a}, we  
devise a regularizer to encourage the derived topological
constraint during optimization and present empirical evidence across
various vision benchmark datasets to support our claims.

\begin{figure*}[t!]
  \centering{
    \includegraphics[width=\textwidth]{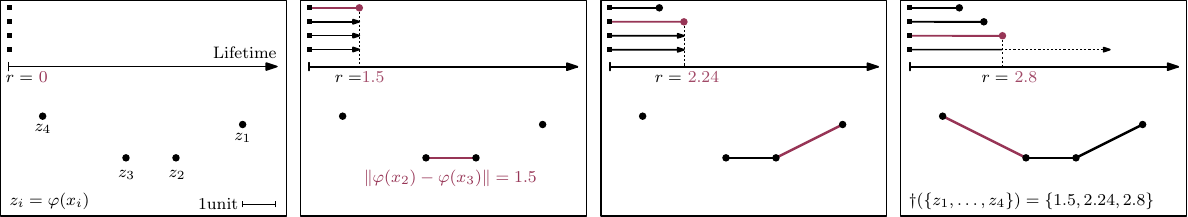}}
  \caption{Illustration of \emph{0-dimensional Vietoris-Rips persistent homology}.  
  Starting from a point set $M = \{z_1,\ldots, z_4\}$, we iteratively construct a graph by sweeping $r \in [0,\infty)$.
  While increasing $r$, we add an edge between $z_i \neq z_j$ if $\|z_i-z_j\|_2=r$ and track 
  the \emph{merging} of connected components as we successively add edges. The times (aka \emph{death-times}) of those 
  \emph{merging events} are stored in the (multi-)set $\dagger(M)$, typically called
  a \emph{persistence barcode} (as all connected components appear at $r=0$, we omit the \emph{birth} times). \label{fig:vr}}
\end{figure*}
\subsection{Related work}
\label{subsection:relatedwork}

Prior works, most related to ours, focus on regularizing 
statistics of internal representations 
in supervised learning settings. As opposed to \emph{explicit}
regularization which aims at restricting model capacity, \eg, by  penalizing norms on network weights, 
regularizing representation statistics can be considered a less direct, 
data dependent, mechanism. 

The intended objective of regularizing representation statistics 
vary across the literature. \citet{Glorot11a}, for instance, encourage
sparsity via an $l_1$ norm penalty.
\citet{Cogswell16a} aim for redundancy
minimization by penalizing the covariance matrix of representations. \citet{Choi19a} 
later extended this idea to perform 
class-wise regularization, \ie, a concept similar to \cite{Belharbi17a} 
where pairwise distances between representations of a class are 
minimized. In an effort to potentially replace batch normalization, 
\citet{Littwin18a} propose to control variations, 
across mini-batches, of each neuron's variance (before activation). 
This is shown to be beneficial in terms of reducing the number of modes 
in the output distribution of neurons. \citet{Liao16a} follow a different 
strategy and instead cluster representations to achieve parsimony, but 
at the cost of having to set the number of clusters (which can be 
difficult in small-sample size regimes). Motivated by the 
relation between the generalization error and the natural gradient \cite{Roux07a}, 
\citet{Joo20a} recently proposed to match the distribution of  
representations to a standard Gaussian, via the sliced Wasserstein distance.  
Yet, to the best of our knowledge, all mentioned regularization 
approaches in the realm of controlling internal representations,
either only empirically demonstrate better generalization, or show
a loose connection to the latter. 
\emph{In contrast, we establish a direct (provable) connection between a property 
encouraged by our regularizer and the associated beneficial effects on 
generalization.}

Our technical contribution resides on the 
intersection of machine learning and algebraic topology 
(persistent homology in particular). 
Driven by various intents, several works have recently adopted concepts from 
algebraic topology.
\citet{Rieck19a}, for instance, study topological aspects
of neural networks, represented as graphs, to guide architecture 
selection, \citet{Guss18a} aim to quantify dataset complexity.
On the more theoretical side, \citet{Bianchini14a} study
the functional complexity of neural networks. Notably, 
these works \emph{passively use} ideas from topology 
for post-hoc analysis of neural networks. More recently, various works 
have presented progress along the lines of \emph{actively} controlling  topological aspects.
\citet{Chen19a} regularize decision boundaries of
classifiers, \citet{Hofer19a} optimize connectivity of internal
representations of autoencoders, and \citet{Rieck19a} match
topological characteristics of input data to the topological 
characteristics of representations learned by an autoencoder. 
Technically, we rely on these advances to eventually  
implement a regularizer, \emph{yet our primary objective is to
study the connection between generalization and the 
topological properties of the probability measure induced
by a neural network's feature extractor $\varphi$.}

\subsection{Notation \& Learning setup}

In the context of Eq.~\eqref{eqn:network}, we refer to
$\mathcal{X}$, $\mathcal{Y}$, $\mathcal{Z}$ as the sample,
label and internal representation space. We assume that 
$\mcZ$ is equipped with a metric
$\metric$ and $\mcY =  [K] = \{1,\ldots,K\}$. By $P$, we denote a probability measure on $\mathcal{X}$
and by $Q$ the push-forward measure, induced by the measurable
function $\featext: \mathcal{X} \to \mathcal{Z}$, on the Borel
$\sigma$-algebra $\Sigma$ defined by $\metric$ on $\mcZ$; 
$Q^b$ denotes the product measure of $Q$.   
$B(z,r)$ refers to the \emph{closed} (metric) ball of radius 
$r>0$ around $z \in \mathcal{Z}$.

Our 
learning setup is to assume a deterministic
relationship\footnote{\ie, an arguably realistic setup \cite{Kolchinsky19a} for many practical problems.} between $y \in \mathcal{Y}$ and $x \in \mathcal{X}$. This
relationship is determined by $c: \supp(P) \to \mathcal{Y}$, where
$\supp(P)$ refers to the support of $P$.
We assume a training sample $S$, consisting of pairs
$(x_1,y_1),\ldots,(x_m,y_m)$ is the result of $m$ i.i.d. draws of $X \sim P$,
labeled via $y_i = c(x_i)$. By $S_{x|k}$ we
denote the data instances, $x_i$, of class $k$.
For a classifier 
$h: \mcX \rightarrow \mcY$ and $X \sim P$, we define
the \emph{generalization error} as
\[
  \expect\limits_{X \sim P}[\mathbb{1}_{h, c}(X)]\enspace,
\]
where
\[
  \mathbb{1}_{h, c}(x) =
  \begin{cases}
    0, \quad h(x) = c(x), \\
    1, \quad \text{else}\enspace.
  \end{cases}
\]

For brevity, proofs are deferred to the suppl. material.

\section{Topologically densified distributions}
\label{section:topologicallyregularizeddistributions}

To build up intuition, consider
$X \sim P$ and $\featext(X) = Z$. As $Z \sim Q$ and the
linear classifier $\cls$ operates on the internal
representations yielded by $\featext$, we can ask
two questions: \emph{(I)
Which properties of $Q$ are beneficial for
  generalization, and (II) how can we impose these properties?}

Increasing the probability that $\varphi$ maps a sample of class $k$ 
into the correct decision region (induced by $\cls$) improves 
generalization. In Lemma~\ref{lem:generalization-in-feature-space} 
we will link this fact to a condition depending on $Q$. 

At first, we introduce a way to measure \emph{class-specific 
probability mass}. 
To this end, we define the restriction of $Q$ (\ie, the push-forward
of $P$ via $\varphi$) to class $k$ 
as
\begin{equation}
  Q_k:\Sigma \rightarrow [0, 1]
  \quad
  \Sigma \ni \sigma \mapsto \frac{Q(\sigma \cap C_k)}{Q(C_k)}\enspace,
  \label{eqn:Q_k}
\end{equation}
where $C_k = \varphi(c^{-1}(\{k\}))$ is the
representation of class $k$ in $\mcZ$.
In the optimal case, the probability mass of the $k$-th decision region, 
measured via $Q_k$, tends towards one. The following 
lemma formalizes this notion by establishing a direct link 
between $Q_k$ and the generalization error.

\vskip1ex
\begin{restatable}{lem}{lem@generalization@in@feature@space}
  \label{lem:generalization-in-feature-space}
  For any class $k \in [K]$,  let $C_k = \featext\left(c^{-1}\big(\{k\}\big)\right)$ be its internal representation 
  and $D_k = \cls^{-1}\big(\{k\}\big)$ be 
  its decision region in $\mcZ$ \wrt $\gamma$.
  If, for $\varepsilon>0$\ , 
  \begin{equation}
  \forall k: 1 - Q_k(D_k) \leq \varepsilon\enspace,
  \label{eqn:lemma1_cond}
  \end{equation}
  then
  \[
    \expect\limits_{X \sim P}[\mathbb{1}_{\cls\circ\featext, c}(X)]
    \leq K\varepsilon
    \enspace.
  \]
\end{restatable}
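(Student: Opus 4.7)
The plan is to decompose the generalization error by conditioning on the true class $j$ and the predicted class $k \neq j$, and then upper bound each of the $K(K-1)$ resulting terms by $\varepsilon$. Concretely, I would first write
\[
\expect_{X\sim P}[\mathbbm{1}_{\cls\circ\featext,c}(X)] = \sum_{j=1}^{K} \sum_{k\neq j} P\bigl(c^{-1}(\{j\}) \cap \featext^{-1}(D_k)\bigr),
\]
which simply partitions the error event $\{\cls(\featext(X)) \neq c(X)\}$ according to the pair $(c(X),\cls(\featext(X)))$.

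The key step is to transfer each summand from $P$ on $\mcX$ to $Q$ on $\mcZ$, so that the hypothesis becomes applicable. This is where the definition of $C_j$ does all the work: since $C_j = \featext(c^{-1}(\{j\}))$, we automatically have $c^{-1}(\{j\}) \subseteq \featext^{-1}(C_j)$, and therefore
\[
P\bigl(c^{-1}(\{j\}) \cap \featext^{-1}(D_k)\bigr) \leq P\bigl(\featext^{-1}(C_j) \cap \featext^{-1}(D_k)\bigr) = Q(C_j \cap D_k).
\]
Because $\cls$ is single-valued (a linear map followed by $\argmax$), the regions $\{D_\ell\}_\ell$ are pairwise disjoint, so for $k \neq j$ we have $D_k \subseteq D_j^c$, which yields
\[
Q(C_j \cap D_k) \leq Q(C_j \cap D_j^c) = Q(C_j)\cdot Q_j(D_j^c) \leq \varepsilon\cdot Q(C_j) \leq \varepsilon,
\]
using in turn the definition of $Q_j$ (with the degenerate case $Q(C_j)=0$ handled by monotonicity), the hypothesis $1 - Q_j(D_j) \leq \varepsilon$, and the trivial bound $Q(C_j) \leq 1$. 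Summing the uniform estimate $\varepsilon$ over the $K(K-1)$ ordered pairs $(j,k)$ with $j \neq k$ then yields the stated bound $K(K-1)\varepsilon$.

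There is no serious technical obstacle; the whole argument is essentially an accounting exercise. The one point worth flagging in advance is that one should not try to sharpen the constant by bounding $\sum_j Q(C_j)$ rather than $Q(C_j)\leq 1$ term by term. Because $\featext$ need not be injective, different class-conditional images $C_j$ can overlap, and $\sum_j Q(C_j)$ need not sit below $1$. Using the pessimistic estimate $Q(C_j)\leq 1$ inside each of the $K(K-1)$ ordered pairs independently is precisely what produces the claimed constant.
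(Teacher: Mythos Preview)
Your argument is correct and reaches the same bound, but it follows a more direct route than the paper's proof. The paper works in $\mcZ$ throughout: it introduces an auxiliary indicator $\mathbbm{1}^{\featext}_{\cls,c}$ on $\mcZ$ that pessimistically flags any $z$ with more than one possible label (i.e., points where $\featext$ causes label overlap), shows $\mathbbm{1}_{\cls\circ\featext,c}\le \mathbbm{1}^{\featext}_{\cls,c}\circ\featext$, and only then applies the change of variables $P\to Q$ and partitions by the decision regions $D_k$. You instead decompose already in $\mcX$ by the pair $(c(X),\cls(\featext(X)))$ and use the elementary inclusion $c^{-1}(\{j\})\subseteq \featext^{-1}(C_j)$ to pass to $Q$ term by term. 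This sidesteps the auxiliary indicator entirely; the non-injectivity of $\featext$ is absorbed in the one-line monotonicity step $P(c^{-1}(\{j\})\cap\featext^{-1}(D_k))\le Q(C_j\cap D_k)$. Your approach is shorter and conceptually cleaner; the paper's has the minor expository advantage of making the label-overlap issue explicit. Both arrive at the identical estimate $Q(C_j)\bigl(1-Q_j(D_j)\bigr)\le\varepsilon$ per ordered pair and hence the constant $K(K-1)$.
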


While Lemma~\ref{lem:generalization-in-feature-space} partially answers \emph{Question (I)}  
by yielding a property beneficial for generalization, it remains to find a mechanism
to impose it.


\begin{figure}[H]
\centering{
\includegraphics[width=\columnwidth]{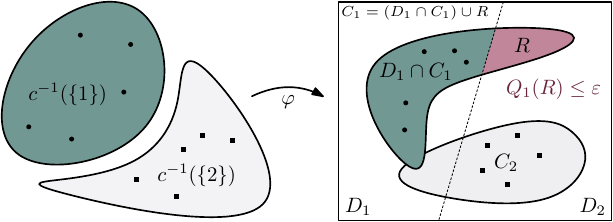}}
\caption{\label{fig:lemma1} Eq.~\eqref{eqn:lemma1_cond} of Lemma~\ref{lem:generalization-in-feature-space} controls how much 
probability mass of $C_k$ is concentrated in $D_k$ \wrt $Q_k$ (only illustrated for $k=1$ here). The smaller $\varepsilon$ gets, 
the less mass is present in $R$, \ie, the region 
where errors on unseen data (of class $k=1$) occur.}
\end{figure}

Inspired by recent work of \citet{CZhang2017a}, we continue by assuming 
$\cls~\circ~\featext$ to be powerful enough to fit any given training set $S$ 
of size $m$. Specifically, we assume that for $z_i = \featext(x_i)$ with $c(x_i) = k$, 
there exists $r>0$ with $B(z_i,r) \subset D_k$.
This is equivalent to a margin assumption on the training instances in $\mathcal{Z}$.
With this in mind, increasing $Q_k(B(z_i,r))$ is beneficial for generalization, as
it can only increase $Q_k(D_k)$. Our strategy hinges on this idea. 

\subsection{Topological densification}
\label{subsection:topodense}

We show that a certain topological constraint on
$Q_k$ will \emph{provably} lead to \emph{probability 
mass concentration}. More precisely, given a reference 
set $M \subseteq \mcZ$ and its $\varepsilon$-extension
\begin{equation}
  M_{\varepsilon} = \bigcup\limits_{z \in M} B(z, \varepsilon), \quad \varepsilon>0\enspace,
\label{eqn:extension}
\end{equation}
the topological constraint provides a non-trivial lower bound on 
$Q_k(M_{\varepsilon})$ in terms of $Q_k(M)$.
Informally, we say that $Q_k$ is \emph{topologically densified} around $M$. 

Our main arguments rely on the probability of an i.i.d. draw from
$Q_k^b$ (\ie, the product measure) to be \emph{connected}. The latter is a topological property
which can be computed using tools from algebraic topology. In
particular, we quantify connectivity via \emph{$0$-dimensional (Vietoris
Rips) persistent homology}, visually illustrated in
Fig.~\ref{fig:vr}. 
For a thorough technical introduction, we refer the reader to, \eg, \cite{Edelsbrunner2010} 
or \cite{Boissonnat18a}.

\vskip1ex
\begin{defi}
  Let $\beta > 0$. A set $M \subseteq \mcZ$ is \emph{$\beta$-connected} iff all $0$-dimensional death-times of its Vietoris-Rips persistent homology are in the \emph{open} interval $(0, \beta)$.
  \label{defi:beta-connected}
\end{defi}

As all information captured by $0$-dimensional Vietoris-Rips persistent homology \cite{Robins00a} is 
encoded in the minimum spanning tree (MST) on $M$ (\wrt metric $\metric$), we can 
equivalently formulate Definition~\ref{defi:beta-connected} in terms of the edges in the MST. In particular, 
we can say that each edge in the MST of $M$ has edge length less than $\beta$. However, the topological 
perspective is preferable, as we can rely on previous work \cite{Hofer19a} which shows how to backpropagate gradients
through the persistent homology computation. This property is needed to implement a regularizer (see \S\ref{subsection:regularization}).

To capture $\beta$-connectedness of a sample, we define the indicator function
$\indc_b^{\beta}: \mathcal{Z}^b \to \{0,1\}$ as
\[
  \indc_b^{\beta}(z_1, \dots, z_b) = 1 \Leftrightarrow \{z_1, \dots, z_b\}~\text{ is }~\beta\text{-connected}
  \enspace.
\]

We now consider the probability of $b$-sized i.i.d. draws
from $Q_k$, see Eq.~\eqref{eqn:Q_k}, to be $\beta$-connected. 

\vskip1ex
\begin{defi}
  \label{defi:bbetac}
  Let $\beta>0, c_{\beta} \in [0,1]$, and $b\in \N$. 
  We call $Q_k$ \emph{$(b,c_{\beta})$-connected} if 
  \[
    Q_k^b(\{\indc_b^{\beta} = 1 \}) \geq c_{\beta}\enspace.
  \]
\end{defi}

To underpin the relevance of this (probabilistic) connectivity property 
\wrt probability mass concentration, we sketch the key insights that lead to the main results of the paper.

{\bf Connectedness yields mass concentration.}
For sake of argument, assume $Q_k$ to be $(b,c_{\beta})$-connected. 
Now, consider a reference set $M \subseteq \mcZ$ together with two sets   
\[
  N = M_{\beta}^{\complement} 
  \quad
  \text{ and } 
  \quad
  O = M_{\beta}\setminus M\enspace,
\]
where $M_{\beta}^{\complement}$ denotes the set complement. These 
three sets are illustrated below on a toy example.
\begin{center}
  \label{fig:M}
\includegraphics[width=0.99\columnwidth]{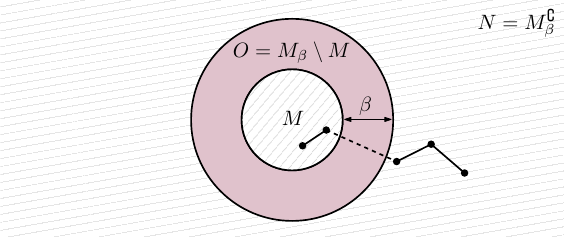}
\end{center}

Apparently, $M, N$ and $O$ partition $\mcZ$.
For an i.i.d. sample, $z_1, \dots, z_b$, from $Q_k$, we can study how 
distributing the $z_i$ among $M$, $N$, and $O$ impacts $\beta$-connectedness. 
In particular, let $\sharp_M$, $\sharp_N$, $\sharp_O$ be the number of $z_i$'s which fall within $M$, $N$, and $O$ respectively, \ie,
\[
  \sharp_M = |\{z_i\} \cap M|,\,\,
  \sharp_N = |\{z_i\} \cap N|,\,\,
  \sharp_O = |\{z_i\} \cap O|\enspace. 
\]

\underline{Observation 1}: If the membership assignment is such that 
\[
   \sharp_M \geq 1
   \text{ and }
   \sharp_N \geq 1
   \text{, but }
   \sharp_O = 0\enspace, 
\]
then $z_1, \dots, z_b$ cannot be $\beta$-connected. This is easy to see, 
as for $z_M \in M$ and $z_N \in N$, we get $\metric(z_M, z_N) \geq \beta$ and there are simply no $z_i$'s in $O$ (see illustration above).

\underline{Observation 2}: The probability of 
\[
  (\sharp_M, \sharp_N, \sharp_O)  \in \{0, \dots, b\}^3
\]
is given by a Trinomial distribution with parameters $Q_k(M)$, $Q_k(O)$ and $Q_k(N) = 1 - Q_k(M) - Q_k(O)$. 

As it holds that  $Q_k(O) = Q_k(M_\beta) - Q_k(M)$ and $Q_k(N) = 1 - Q_k(M_{\beta})$, the probability 
of a $b$-sample \emph{not} being $\beta$-connected can be expressed in terms of 
\[
  \fp = Q_k(M) \quad\text{and}\quad \fq = Q_k(M_{\beta})
  \enspace. 
\]
The key aspect of this construction is that we describe 
\emph{events} where $z_1, \dots, z_b$ cannot be $\beta$-connected, \ie, 
\[
  E = \{(z_i) \in \mcZ^b: \sharp_M \geq 1,  \sharp_N \geq 1, \sharp_O = 0\}
\]
and $\indc_b^{\beta}(E)=\{0\}$.
As we will see, based on the Trinomial distribution, one can derive a polynomial $\Psi$ expressing 
the probability of $E$, \ie, 
\[
  Q_k(E) = \Psi(\fp, \fq)
  \enspace.
\]

Consequently, as $c_{\beta}$ is defined to be the probability 
of a $b$-sample to be $\beta$-connected and $E$ describes events where 
$b$-samples are \emph{not} $\beta$-connected, it holds that 
\[
  1 - c_{\beta} \geq Q_k(E) = \Psi(\fp, \fq)
  \enspace.
\]
We will see, by properties of $\Psi$, that this relationship allows us to lower bound $\fq = Q_k(M_{\beta})$, 
if $\fp = Q_k(M)$ is known. In other words, if $M$ covers a certain mass, then we can infer the minimal mass  
which has to be covered by $M_{\beta}$.  
Our main result -- presented in Theorem~\ref{cor:beta-margin-concentration} -- is slightly more general, as it 
not only considers the $\beta$-extension of $M$, but the $l\cdot\beta$-extension for $l\in\N$.
In this more general case, the polynomial $\Psi$ takes the form as in Definition~\ref{def:psi-function}
below.

\vskip1ex
\begin{restatable}{defi}{restatable@defi@psifunction}
  \label{def:psi-function}
  Let $b, l \in \N$ and $p, q \in [0, 1]$. For $p \leq q$, 
  we define the polynomial
  \begin{align*}
    \Psi(p, q \parasep b, l) & =
    \sum\limits_{\substack{(u,v,w) \\ \in I(b, l)}}
    \frac{b!}{u!\ v!\ w!}
    p^{u}
    (1-q)^{v}
    (q-p)^{w}\enspace,
  \end{align*}
  where the index set $I(b,l)$ is given by
  \begin{align*}
    I(b, l) =
    \left\{\right.
     & (u,v,w) \in \mathbb{N}_0^3:                     \\
     & u+v+w = b \;\wedge\; u,v \geq 1 \;\wedge\; w \leq l - 1
    \left.\right\}
    \enspace.
  \end{align*}
\end{restatable}

The most important properties of $\Psi$ are: (1) $\Psi$  is \emph{monotonically increasing} in $p$ (and $l$); 
(2) $\Psi$ is \emph{monotonically decreasing} in $q$ and (3) $\Psi$ vanishes for $q=1$.

\vskip1ex
\begin{restatable}{thm}{restatable@thm@psifunction}
  \label{cor:beta-margin-concentration}
  Let $b,l\in \N$ and let $Q_k$ be $(b,c_{\beta})$-connected. Then, for all reference sets 
  $M \in \Sigma$ 
  and
  \[
    \fp=Q_k(M),\ \fq=Q_k(M_{l\cdot\beta})
  \]
  it holds that
  \begin{equation}
    \label{eq:beta-margin-concentration}
    1-c_{\beta}
    \geq
    \Psi\big(\fp, \fq \parasep b, l \big)\enspace.
  \end{equation}
\end{restatable}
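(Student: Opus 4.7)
The plan is to make the informal sketch in the text rigorous by (i) isolating a ``bad'' event $E \subseteq \mcZ^b$ on which any $b$-sample fails to be $\beta$-connected, (ii) computing $Q_k^b(E)$ exactly via a Trinomial distribution, and (iii) comparing the resulting lower bound on $1-c_\beta$ with $\Psi(\fp,\fq\parasep b,l)$.

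First I would partition $\mcZ$ into the reference set $M$, the annulus $O = M_{l\cdot\beta}\setminus M$, and the exterior $N = M_{l\cdot\beta}^{\complement}$, and denote by $\sharp_M,\sharp_O,\sharp_N$ the respective counts among $z_1,\dots,z_b$. Since $M \subseteq M_{l\cdot\beta}$ we have $\fp \leq \fq$, and the probabilities $Q_k(M)=\fp$, $Q_k(O)=\fq-\fp$, $Q_k(N)=1-\fq$ sum to $1$, so $(\sharp_M, \sharp_N, \sharp_O)$ follows a Trinomial distribution with these parameters. Defining
\[
  E = \{(z_1,\dots,z_b) \in \mcZ^b : \sharp_M \geq 1,\ \sharp_N \geq 1,\ \sharp_O \leq l-1\},
\]
summing the Trinomial probabilities over the triples in $I(b,l)$ immediately yields $Q_k^b(E)=\Psi(\fp,\fq\parasep b,l)$ in the notation of Definition~\ref{def:psi-function}.

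The technical core is to show $E \subseteq \{\indc_b^{\beta}=0\}$. Suppose, for contradiction, that $(z_1,\dots,z_b)\in E$ is $\beta$-connected; pick $z_M \in \{z_i\}\cap M$ and $z_N \in \{z_i\}\cap N$, and invoke the MST characterization of $\beta$-connectedness (all MST edges have length $<\beta$) to obtain a simple path $z_M = v_0 \to v_1 \to \cdots \to v_n = z_N$ in the sample's MST with $\metric(v_i,v_{i+1})<\beta$. The map $f(z) := \metric(z, M)$ is $1$-Lipschitz, so $|f(v_{i+1})-f(v_i)|<\beta$, while $f(v_0)=0$ and $f(v_n)>l\cdot\beta$. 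Let $j^*$ be the largest index with $v_{j^*}\in M$, so $f(v_{j^*})=0$. Iterating the Lipschitz bound gives $f(v_{j^*+k})<k\beta\leq l\beta$ for $k=1,\dots,l$, so each such vertex lies in $M_{l\cdot\beta}$; since $j^*+k>j^*$ also forces $v_{j^*+k}\notin M$, we conclude $v_{j^*+k}\in O$. Moreover $f(v_{j^*+l})<l\beta<f(v_n)$ forces $j^*+l<n$, so $v_{j^*+1},\dots,v_{j^*+l}$ are $l$ distinct vertices in $O$, contradicting $\sharp_O\leq l-1$.

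Combining these steps, $Q_k^b(\{\indc_b^{\beta}=0\})\geq Q_k^b(E)=\Psi(\fp,\fq\parasep b,l)$, while $(b,c_{\beta})$-connectedness of $Q_k$ (Definition~\ref{defi:bbetac}) gives $Q_k^b(\{\indc_b^{\beta}=0\})\leq 1-c_\beta$, establishing Eq.~\eqref{eq:beta-margin-concentration}. The hard part is clearly the topological/combinatorial step $E\subseteq\{\indc_b^{\beta}=0\}$: the text sketches only the $l=1$ case, and making the Lipschitz pigeonhole argument work uniformly in $l$ (the path is not assumed monotone in $f$, so one must anchor at the \emph{last} vertex in $M$ rather than the first) is where the geometry of $\beta$-connectedness genuinely enters; everything else is bookkeeping.
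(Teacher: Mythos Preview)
Your argument is correct and follows the same overall scheme as the paper: partition $\mcZ$ into $M$, the annulus $O=M_{l\beta}\setminus M$, and the exterior $N=M_{l\beta}^{\complement}$; compute the Trinomial probability of the event $E$; and show $E\subseteq\{\indc_b^{\beta}=0\}$ via a path argument. The paper packages the last step as a separate lemma for general $A,B$ with set margin $\mathbbm{m}(A,B)\geq l\beta$ and then specializes to $A=M$, $B=M_{l\beta}^{\complement}$; its path argument extracts a sub-path whose endpoints lie in $A,B$ and whose interior lies entirely in $C=(A\cup B)^{\complement}$, then uses the triangle inequality directly to get $\metric(z_{i_1},z_{i_m})<(m-1)\beta\leq l\beta$, contradicting the margin. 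Your Lipschitz formulation via $f(z)=\metric(z,M)$ and anchoring at the \emph{last} $M$-vertex is a different dressing of the same idea and is arguably tidier for general $l$.

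Two small technical points. First, $v_n\in M_{l\beta}^{\complement}$ only yields $f(v_n)\geq l\beta$, not the strict inequality you wrote: each $\metric(v_n,m)>l\beta$ but the infimum may equal $l\beta$. This is harmless, since $f(v_{j^*+k})<k\beta$ for $k\leq n-j^*$ gives $f(v_n)<(n-j^*)\beta$, and combining with $f(v_n)\geq l\beta$ already forces $n-j^*>l$. Second, you should establish $j^*+l\leq n$ \emph{before} invoking $v_{j^*+l}$; running the Lipschitz iteration up to $k=n-j^*$ and reading off $n-j^*>l$ first (as just sketched) fixes the order cleanly.
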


\subsection{Ramifications of Theorem~\ref{cor:beta-margin-concentration}}

By properties (1) -- (3) of $\Psi$, 
Theorem~\ref{cor:beta-margin-concentration}
allows us to lower-bound the mass increase caused by extending (see Eq.~\eqref{eqn:extension}) 
a reference set $M$ by $l\cdot\beta$. Recall that this is beneficial for  
generalization, if $M$ is constructed from representations (in $\mcZ$) of correctly classified 
training instances.

In detail, assume that the mass of the reference set $M$, $\fp = Q_k(M)$, is fixed 
and let $\fq = Q_k(M_{l\cdot\beta})$ be the mass of the $l\cdot\beta$ extension. 
Then, by Theorem~\ref{cor:beta-margin-concentration}, 
\begin{equation}
\fq \in \Big\{
  q \in [\fp,1]:
  1 - c_{\beta}
  \geq
  \Psi(\fp, q \parasep b, l)
  \Big\} = A\enspace,
\label{eqn:setA}
\end{equation}
and thus $A$ is non-empty. Now let $\mcR_{b, c_{\beta}}(\fp, l) = \min A$ 
identify the \emph{smallest mass} in the $l\cdot\beta$-extension 
for which the inequality in Eq.~\eqref{eq:beta-margin-concentration}
holds. As $\Psi$ is monotonically decreasing, $\mcR_{b, c_{\beta}}(\fp, l)$ 
is monotonically increasing in $c_{\beta}$. This will motivate our 
regularization goal of \emph{increasing $c_{\beta}$}.

\textbf{Sufficient condition for mass concentration.} Note that 
$\fq \geq \mcR_{b, c_{\beta}}(\fp, l) \geq \fp$ and thus,  
mass concentration is guaranteed as long as $\mcR_{b, c_{\beta}}(\fp, l) > \fp$. Otherwise, the mass in the $l\cdot\beta$-extension of $M$ may not be greater than the mass in $M$. In fact,  $\mcR_{b, c_{\beta}}(\fp, l) > \fp$ only holds if
%
%
%
%
%
%
%
\begin{equation}
1-c_{\beta} < \Psi(\fp,\fp \parasep b, l) = 1-\fp^b-(1-\fp)^b
\enspace.
\label{eqn:criticalmass}
\end{equation}

The behavior of Eq.~\eqref{eqn:criticalmass} is specifically
relevant in the region where $\fp$ is close to $0$, as requiring
a large mass in $M$ would be detrimental. Notably, as we can see in Fig.~\ref{fig:criticalmass}, which shows Eq.~\eqref{eqn:criticalmass} as 
a function of $\fp=Q_k(M)$, already small values of $\fp$ reach the 
\emph{critical} threshold of $1-c_{\beta}$.
\vskip1ex
\begin{figure}[H]
\centering{
\includegraphics[width=0.99\columnwidth]{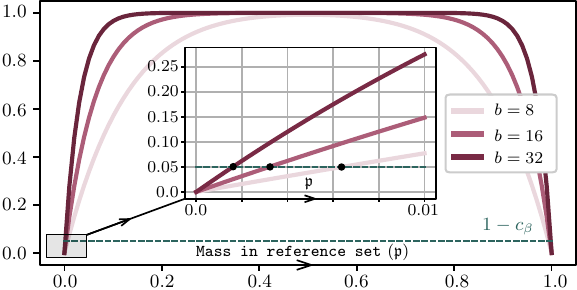}}
\vspace{-3pt}
\caption{Illustration of when $1-\fp^b-(1-\fp)^b > 1-c_{\beta}$ holds, 
\ie, when mass concentration effects start to occur. 
The zoomed-in view shows
the relevant region near $0$.\label{fig:criticalmass}}
\end{figure}

\textbf{Quantification of mass concentration}. To understand how the minimal mass in $M_{l \cdot \beta}$ is boosted by 
the mass of the reference set $M$, we visualize (in Fig.~\ref{fig:bound2}) 
the minimial values for 
$\fq = M_{l \cdot \beta}$
as a function of 
$\fp=Q_k(M)$, \ie, the mass of the reference set $M$.
Similar to Fig.~\ref{fig:criticalmass}, as $\fp$ approaches $0$ (or $1$), 
the mass concentration effect is rendered negligible.
However, already a small mass in $M$ is sufficient for strong
mass concentration in $M_{l\cdot\beta}$. 
\begin{figure}[t!]
\centering{
\includegraphics[width=0.99\columnwidth]{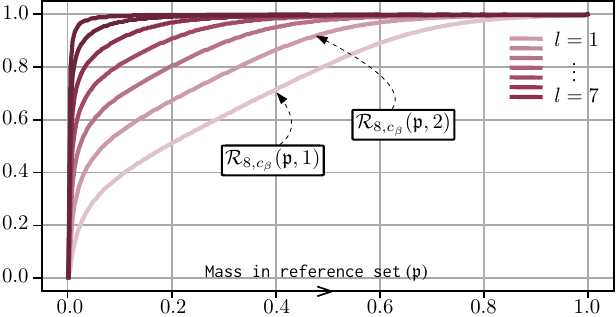}}
\vspace{-3pt}
\caption{Illustration of $\mcR_{b, c_{\beta}}(\fp, l)$, \ie, 
the lower bound on $\fq = Q_k(M_{l\cdot\beta})$,  
plotted as a function of the mass $\fp=Q_k(M)$ of the reference set $M$ (for $b=8$ and 
different $l$).
\label{fig:bound2}}
\end{figure}

Next, we discuss the role of $c_{\beta}$, \ie, the probability of a $b$-sized sample from $Q_k$ to be $\beta$-connected.
Fig.~\ref{fig:bound1} illustrates, for different choices of $l$, where $1-c_{\beta} \geq \Psi(\fp,\fq\parasep b,l)$ holds, as a function
of $\fq$ with $\fp=0.1$ fixed. 
Most importantly, as $c_{\beta}$ is increased, the 
minimal mass in a particular $l\cdot\beta$ extension of $M$, characterized by $\mcR_{b, c_{\beta}}(\fp, l)$,
shifts towards larger values. 

\begin{figure}[H]
\centering{
\includegraphics[width=0.99\columnwidth]{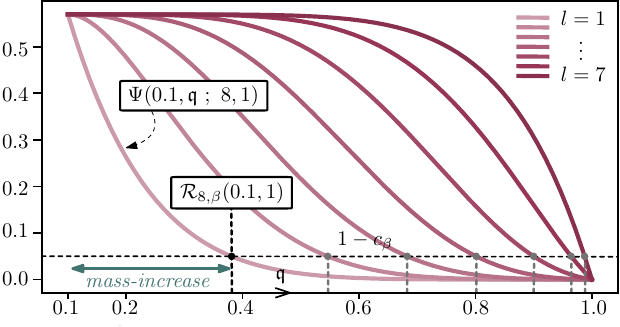}}
\vspace{-3pt}
\caption{\label{fig:bound1} Illustration of $\Psi$ 
for $\fp=0.1, b=8$ and different $l \in \mathbb{N}$. Points at which
$1-c_{\beta} = \Psi(\fp,q~;~b,l)$ holds are marked 
by dots.}
\end{figure}

\emph{Overall}, Theorem~\ref{cor:beta-margin-concentration} and the 
analysis presented above provide a possible answer to 
$\emph{Question (II)}$ stated at the beginning of \S\ref{section:topologicallyregularizeddistributions}. 
In particular, a mechanism to increase $Q_k(D_k)$ in
Eq.~\eqref{eqn:lemma1_cond} is to encourage $Q_k$ to 
be $(b, c_{\beta})$-connected. We conclude with the 
following summary:

\vspace{-3pt}
\begin{center}
\emph{If a measure is $(b, c_{\beta})$-connected, then 
mass attracts mass; the higher  
$c_{\beta}$, the stronger the effect.}
\end{center}
%
%

\subsection{Limitations}
\label{subsection:limitations}

As the mass, $\fp$, of the reference set $M$ is typically 
unknown, all results are \emph{relative} (\wrt $\fp$), not absolute. 
This warrants a discussion of potential limitations in a 
learning context.

Specifically, when learning from samples, an arguably 
natural choice for a class-specific 
reference set is to consider the union of balls
around the representations of the training samples, yielding (for $r>0$)
\begin{equation}
  M^{(k)} = \bigcup_{\mathclap{z \in \varphi(S_{x|k})}}\ B(z, r)\enspace.
  \label{eqn:trainingreferenceset}
\end{equation}
Now, lets assume that $Q_k$ is $(b, c_{\beta})$-connected
and the linear classifier $\cls$ attains zero error on $\varphi(S_{x|k})$.
Two issues suppress good generalization: 

\emph{First}, the derived mass concentration is only beneficial if,  
for a given $l \in \N$, the reference set $M^{(k)}$ is located 
sufficiently far away from the decision boundary of
class $k$, \ie, 
\begin{equation}
M_{l\cdot\beta}^{(k)} \subset D_k = \gamma^{-1}\big(\{k\}\big)\enspace.
\label{eqn:subsetcondition}
\end{equation}
In practice, we can, to some extent, induce such a configuration by
selecting a loss function which yields a large margin in $\mcZ$. 
In that case, at least a $\mcR_{b,c_{\beta}}(Q_k(M^{(k)}),l)$ 
proportion of class $k$ is correctly classified by $\gamma$. 
A violation of Eq.~\eqref{eqn:subsetcondition} would mean that 
mass is still concentrated, but the $l\cdot\beta$-extension might 
reach across the $k$-th decision region.

\emph{Second}, the sample $S_{x|k}$ has to be \emph{good} in the sense that 
$\fp = Q_k(M^{(k)})$ is sufficiently large (as noted earlier, see Fig.~\ref{fig:criticalmass}). 
This is somewhat related to the notion of \emph{representativeness} of a training set, 
\ie, a topic well studied in works on learnability.

Overall, given that $Q_k$ is $(b, c_{\beta})$-connected, 
mass concentration effects provably occur; yet, advantages 
only come to light under the conditions
outlined above. It is thus worth designing 
a \emph{regularization} strategy to encourage $(b, c_{\beta})$-connectedness 
during optimization. We describe such a strategy next.

\subsection{Regularization}
\label{subsection:regularization}

To encourage $(b, c_{\beta})$-connectedness of $Q_k$, it is obvious 
that we have to consider multiple training instances
of each class \emph{jointly}. To be practical, we integrate 
this requirement into the prevalent setting of learning with mini-batches. 

Our integration strategy is simple and, in fact, similar approaches (in a different context) have been
investigated in prior work \citep[see][]{Hoffer19a}. In detail, we construct each mini-batch, $\ttB$, as a collection of $\ttn$
sub-batches, \ie, $\ttB = (\ttB_1, \dots, \ttB_{\ttn})$.
Each sub-batch consists of $b$ samples from the \emph{same} class, thus the resulting mini-batch $\ttB$ is built from $\ttn \cdot b$ samples. 
Our regularizer is formulated as a loss term that penalizes deviations from a $\beta$-connected arrangement of the $z_i$ in each sub-batch $\ttB_j$. 
To realize this, we leverage a recent approach from \citet{Hofer19a} which introduces a \emph{differentiable} 
penalty on lifetimes of connected components (\wrt Vietoris-Rips persistent homology). 

Formally, let $\dagger(\ttB_i)$ contain the death-times (see Fig.~\ref{fig:vr}) computed for sub-batch $\ttB_i$. Then, given the hyper-parameter $\beta>0$, we set the \emph{connectivity penalty} for mini-batch $\ttB$ as
\begin{equation}
\label{eq:connectivity-loss}
  \mcL(\ttB) = \sum\limits_{i=1}^{\ttn} \sum\limits_{d \in \dagger(\ttB_i)} |d - \beta|
  \enspace.
\end{equation}
Notably, this is the same term as in \cite{Hofer19a}, 
however, motivated by a different objective.

Admittedly, to encourage $(b, c_{\beta})$-connectivity of $Q_k$, it would suffice to use a less restrictive variant and only penalize lifetimes \emph{greater} than $\beta$. However, we have empirically observed that Eq.~\eqref{eq:connectivity-loss} is more effective. 
This would imply that it is beneficial to prevent lifetimes from collapsing and, as a result, prevent $Q_k$ to become \emph{overly} dense. Currently, we can not formally explain this effect, but hypothesize that -- to some extent -- preventing lifetimes from collapsing preserves variance in the gradients, \ie, a property 
useful during SGD's \emph{drift} phase \cite{Shwartz17a}.

\section{Experiments}
\label{section:experiments}

For our experiments\footnote{\texttt{PyTorch} source code is available at  \url{https://github.com/c-hofer/topologically_densified_distributions}}, we draw on a setup common to many works 
in semi-supervised learning \cite{Laine17a,Oliver18a,Verma19a}, both in terms of dataset selection and network architecture. As small sample-size experiments are typically presented as \emph{baselines} in these works, we believe this to be an arguable choice. In particular, we present experiments on three (10 class) vision benchmark datasets:  MNIST, SVHN and CIFAR10. For MNIST and SVHN, we limit training 
data to 250 instances, on CIFAR10 to 500 (and 1,000), respectively.

\textbf{Architecture \& Optimization.} For CIFAR10 and SVHN
we use the CNN-13 architecture of \cite{Laine17a} which
already includes dropout regularization \cite{Srivastava14a}.
Only on MNIST we rely on a simpler CNN architecture with four convolutional blocks and max-pooling (w/o dropout). 
Both architectures have a final linear classifier 
$\gamma: \mathbb{R}^{128}\to \mathbb{R}^K$, use batch 
normalization \cite{Ioffe15a}, and 
fit our network decomposition of Eq.~\eqref{eqn:network}. 
Optimization is done by SGD with momentum (0.9) over 310 epochs with
cross-entropy loss and cosine learning rate annealing \cite{Loshchilov17a} (without restarts).
As all experiments use weight decay, it is important to note 
that batch normalization \emph{combined} with weight decay 
only exerts regularization on the classifier $\gamma$. 
In fact, several works have shown that the combination of
batch normalization and weight decay mainly affects the 
effective learning rate \cite{Laarhoven17a,Zhang19a}. 

The weighting of our regularization term is set such that the
range of the loss from Eq.~\eqref{eq:connectivity-loss} is
comparable, in range, to the cross-entropy loss. 
We choose a sub-batch size of $b=16$ and draw $\ttn=8$ 
sub-batches (see \S\ref{subsection:regularization}); 
this amounts to a total batch size of $128$.
While, empirically, this setting facilitates stable optimization 
(\wrt batch norm statistics), we acknowledge that further 
theoretical insights could lead to a more informed choice. 
Additional parameter details are provided when relevant 
(and full details can be found in the supplementary material).

First, in \S\ref{subsection:exp1}, we investigate to which 
extent the $(b,c_{\beta})$-connectivity property (imposed 
during optimization), translates to unseen data. 
In \S\ref{subsection:exp2}, we study 
the effect of $\beta$ and whether this parameter can be  
reliably cross-validated on a \emph{small} validation set. 
Finally, in \S\ref{subsection:exp3}, we compare to 
related work on regularizing statistics of 
internal representations.

\subsection{Evaluating $(b, c_{\beta})$-connectivity}
\label{subsection:exp1}

For a fixed $b$, we study how well $\beta$-connectivity
is achieved during optimization, as we vary $\beta$.
In accordance with Definition~\ref{defi:beta-connected}, we measure $\beta$-connectivity
via the lifetimes in the $0$-dimensional persistence barcodes, computed over 
500 random sub-batches (chosen from training/testing data).

Qualitatively, in Fig.~\ref{fig:beta_generalization} 
we see that increasing $\beta$ not only translates to an increase
of lifetimes in the internal representations of training instances, but 
equally translates to an increase in lifetimes on sub-batches of the  
testing data.
While we observe a slight offset in the lifetime average (training \emph{vs.} testing),
and an increase in variance, these effects are largely constant 
across $\beta$. This suggests the \emph{effect}
of regularization is qualitatively invariant to the choice
of $\beta$.

\begin{figure}[h!]
\centering{
\includegraphics[width=0.95\columnwidth]{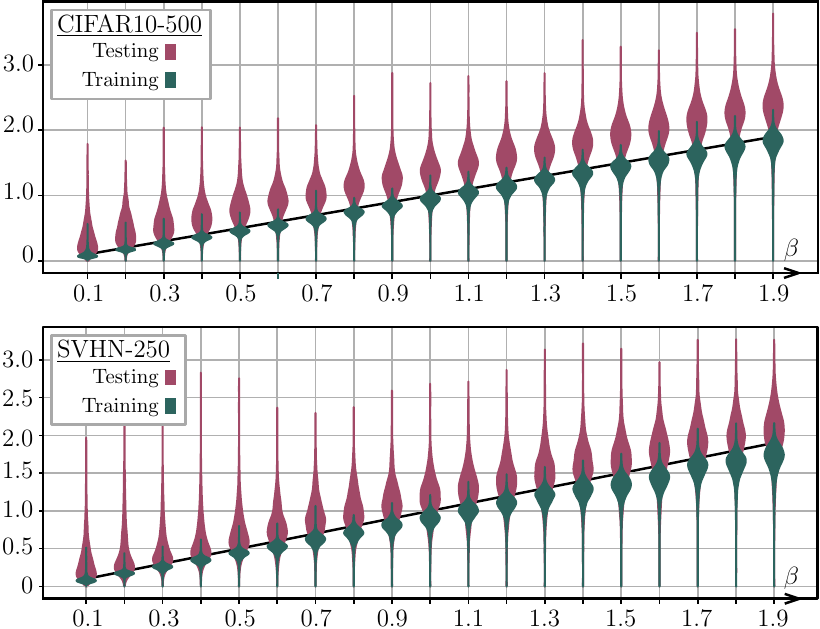}}
\vspace{-3pt}
\caption{Lifetime distribution computed over $500$ random sub-batches
(of size $16$) from (CIFAR10/SVHN) training and testing data, as a function of 
$\beta \in [0.1,1.9]$ (set during optimization).\label{fig:beta_generalization}}
\vspace{-8pt}
\end{figure}

\subsection{Selection of $\beta$}
\label{subsection:exp2}

Using Eq.~\eqref{eq:connectivity-loss} as a loss term 
requires to set $\beta$ a-priori. 
However, this choice can be crucial, as it assigns a notion of \emph{scale} to $\mathcal{Z}$ and thus  
interacts with the linear classifier. In particular, $\beta$
is interweaved with the Lipschitz constant of $\gamma$  
which is affected by weight decay. 

While, at 
the moment, we do not have sufficient theoretical insights 
into the interaction between weight decay on $\gamma$ and 
the choice of $\beta$, we argue that $\beta$ can still 
be cross-validated (a common practice for most
hyper-parameters). Yet, in small sample-size regimes, 
having a large labeled validation set is unrealistic. 
Thus, we study the behavior of cross-validating $\beta$, 
when the validation set is of size equal to the training
corpus. To this end, Fig.~\ref{fig:cv} shows the testing error
on CIFAR10 (using 500 training samples) and SVHN (using 
250 training samples), over a range of $\beta$. 
Additionally, we overlay the variation
in the error on the held-out validation sets. As we can see, 
the latter closely tracks the testing error as $\beta$
is increased from $0.1$ to $1.9$. This indicates that choosing
$\beta$ through cross-validation can be done effectively. 
Fig.~\ref{fig:cv} additionally reveals that the testing
error behaves smoothly around the optimal choice of $\beta$.
 
\begin{figure}[h!]
\centering{
\includegraphics[width=0.95\columnwidth]{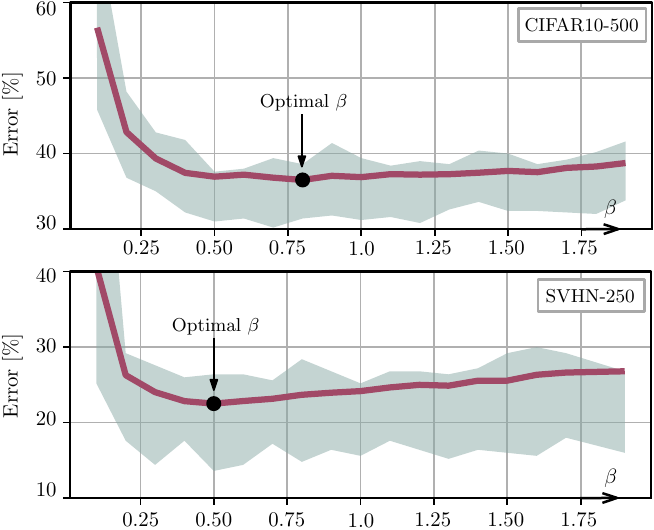}}
\caption{\label{fig:cv} 
Testing error (\textcolor{purple2}{purple}; averaged over 10 runs) 
over various choices of $\beta \in [0.1,1.9]$. 
The shaded region shows the variation in the 
testing error on small-validation sets. This indicates that
the choice of $\beta$ can be cross-validated effectively.}
\end{figure} 

\begin{table*}[t!]
\begin{center}
\begin{small}
\begin{tabular}{lcccc}
\toprule 
\textbf{Regularization}				     & 
{MNIST-250} & 
{SVHN-250} & 
{CIFAR10-500} & {CIFAR10-1k} \\
\midrule
Vanilla 							 	& $7.1 \pm 1.0$ 	 & $30.1 \pm 2.9$ 	& $39.4 \pm 1.5$ 	& $29.5 \pm 0.8$ \\
\arrayrulecolor{black!50}
\midrule
~~+ Jac.-Reg. \cite{Hoffman19a}	    & $6.2 \pm 0.8$  & $33.1 \pm 2.8$ 	& $39.7 \pm 2.0$ 	& $29.8 \pm 1.2$ \\
~~+ DeCov 	  \cite{Cogswell16a}	    & $6.5 \pm 1.1$  & $28.9 \pm 2.2$ 	& $38.2 \pm 1.5$    & $29.0 \pm 0.6$ \\
~~+ VR        \cite{Choi19a}			& $6.1 \pm 0.5$  & $28.2 \pm 2.4$   & $38.6 \pm 1.4$ 	        & $29.3 \pm 0.7$ \\
~~+ cw-CR     \cite{Choi19a}		    & $7.0 \pm 0.6$  & $28.8 \pm 2.9$   & $39.0 \pm 1.9$   	& $29.1 \pm 0.7$ \\
~~+ cw-VR 	  \cite{Choi19a}		& $6.2 \pm 0.8$  & $28.4 \pm 2.5$   & $38.5 \pm 1.6$   	& $29.0 \pm 0.7$ \\
\midrule
~~+ Sub-batches 			 		& $7.1 \pm 0.5$	 & $27.5 \pm 2.6$ 	& $38.3 \pm 3.0$ 	& $28.9 \pm 0.4$ \\
~~+ Sub-batches + Top.-Reg. (\textbf{Ours})	 & 
	$\mathbf{5.6  \pm 0.7}$ & 
	$\mathbf{22.5 \pm 2.0}$ & 
	$\mathbf{36.5 \pm 1.2}$ & 
	$\mathbf{28.5 \pm 0.6}$ \\
\arrayrulecolor{black}
\midrule
~~+ Sub-batches + Top.-Reg. (\textbf{Ours}) $\ddagger$	 & 
	$\textcolor{purple3}{\mathbf{5.9  \pm 0.3}}$ & 
	$\textcolor{purple3}{\mathbf{23.3 \pm 1.1}}$ & 
	$\textcolor{purple3}{\mathbf{36.8 \pm 0.3}}$ & 
	$\textcolor{purple3}{\mathbf{28.8 \pm 0.3}}$ \\
\bottomrule
\end{tabular}
\end{small}
\end{center}
\caption{Comparison to state-of-the-art regularizers added to \emph{Vanilla} training which includes batch normalization, 
dropout ($0.5$; except for MNIST) and weight decay. Reported is the \underline{lowest achievable} test error [\%] ($\pm$ std. deviation) over a hyper-parameter grid, averaged over 10 cross-validation runs. Numbers attached to the dataset names indicates the number of training instances used. The last row ($\ddagger$) lists the results of our approach when $\beta$ is cross-validated (and all other hyper-parameters are fixed) as described in \S\ref{subsection:exp3}. \label{tbl:sota}}
\end{table*}

\subsection{Comparison to the state-of-the-art}
\label{subsection:exp3}

Finally, we present a comparison to different state-of-the-art 
regularizers. Specifically, we evaluate against works that 
regularize statistics of internal representations (right before
the linear classifier). This includes the \textit{DeCov} loss
of \citet{Cogswell16a}, as well as its class-wise extensions 
(\textit{cw-CR} and \textit{cw-VR}), proposed in \citet{Choi19a}.
As a representative of an alternative approach, 
we provide results when penalizing the network
Jacobian, as proposed in \cite{Sokolic17a, Hoffman19a}. For 
these comparison experiments, we empirically found a batch 
size of 32 to produce the best results. To account for the 
difference in the update steps of SGD \wrt to our approach 
(caused by the sub-batch construction), we adjusted the number 
of epochs accordingly. All approaches are evaluated
on the same training/testing splits and achieve
zero training error.

To establish a \emph{strong baseline}, we decided
to conduct an extensive hyper-parameter search over a grid 
of (1) initial learning rate, (2) weight decay and (3) 
weighting of the regularization terms. For each grid 
point, we run 10 cross-validation runs, average, and 
then pick the \emph{lowest achievable error} on the 
test set. This establishes an \emph{lower bound} on the 
error if hyper-parameters were chosen via a validation
set. Table \ref{tbl:sota} lists the corresponding results.

To test our regularizer against these lower bounds, we fix
all hyper-parameters and cross-validate $\beta$, as discussed 
in \S\ref{subsection:exp2}. Notably,  
topological regularization \emph{consistently} exhibits the 
lowest error, even when compared to the optimistic performance 
estimate of the other regularizers. This strongly supports
our claim that mass concentration is beneficial.

\section{Discussion}
\label{section:discussion}

As emphasized earlier, our theoretical results are \emph{relative} in nature, 
in particular, relative to a reference set $M$, naturally determined by
the representations of the training samples. 

In \S\ref{section:topologicallyregularizeddistributions}, we linked 
mass concentration to generalization and showed that mass in the $\beta$-extension 
of $M$ increases, as the probability $c_{\beta}$, \ie, the probability of a 
$b$-sized sample from $Q_k$ to be $\beta$-connected, 
increases. Results in Table~\ref{tbl:sota} empirically support this. 
\emph{However, can mass concentration be directly observed?} While it is challenging
to measure this, we can perform a proxy experiment. 
In detail, we select two models trained with equal $\beta$ 
and define the reference sets (per class) via balls of radius $r>0$, see Eq.~\eqref{eqn:trainingreferenceset},
around 500 randomly selected training samples.
By successively 
increasing $r$ and counting \emph{test samples} that occur in 
$B(z_i,r)$ and $B(z_i, r+\beta)$, resp., we obtain estimates of $\fp=Q_k(M)$ 
and $\fq=Q_k(M_{\beta})$. As $\beta$-connectivity is not \emph{strictly}
enforced, but used for regularization, we have to account for the lifetime 
shift seen in Fig.~\ref{fig:empiricalmass}. Hence, to estimate $\fp$ and $\fq$,  
we use $\beta=1.4$, which is higher than the sought-for $\beta=0.8$ during 
training. Fig.~\ref{fig:empiricalmass} (left) shows 
the mass estimates for two CIFAR10 models, trained on 500 and 1,000 
samples, respectively.
\begin{figure}[H]
\centering{
\includegraphics[width=\columnwidth]{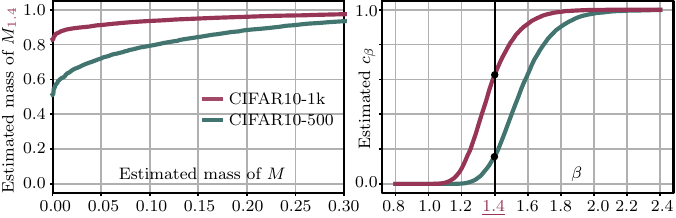}}
\caption{\label{fig:empiricalmass}
(\emph{Left}) Estimated mass in the reference set $M$ \emph{vs.} 
the estimated mass in $M_{\beta}$, shown for $\beta=1.4$; (\emph{Right}) Estimated probability
of a $b$-sized sample to be $\beta$-connected. All estimations are computed
on testing data.} 
\vspace{-3pt}
\end{figure}

As we can see -- especially for small (estimated) mass in the reference set --
the mass concentration effect is much stronger for the model trained
with 1,000 samples. The underlying reason is that using more samples
improves how $(b, c_{\beta})$-connectivity transfers to testing data.
Estimates for $c_{\beta}$ across both models confirm 
the latter, see Fig.~\ref{fig:empiricalmass} (right). This strongly indicates that mass concentration is not only
a theoretical result, but is indeed observed on real data. 

Overall, the presented analysis suggests that studying and \emph{controlling} 
topological properties of representations is promising. Yet, we have only 
started to scratch at the surface of how topological aspects influence 
generalization. We argue that further (formal) understanding of these connections 
could offer novel insights into the generalization puzzle.

\section*{Acknowledgements}
This work was partially funded by the Austrian Science Fund (FWF): project FWF P31799-N38 and the Land Salzburg (WISS 2025) under project numbers 
20102-F1901166-KZP and 20204-WISS/225/197-2019.

\clearpage
\bibliography{libraryChecked,booksLibrary}
\bibliographystyle{icml2019}
\vfill

%
%
%
\pagebreak
\onecolumn

\theoremstyle{definition}
\newtheorem{Sdefi}{Definition}

\theoremstyle{plain}
\newtheorem{Sthm}{Theorem}
\newtheorem{Slem}{Lemma}
\newtheorem{Scor}{Corollary}

\theoremstyle{remark}
\newtheorem{Srem}{Remark}

\newcommand{\supprefix}{S}

\setcounter{section}{0}
\renewcommand{\thesection}{\supprefix\arabic{section}}

\setcounter{equation}{0}
\renewcommand{\theequation}{\supprefix\arabic{equation}}

\setcounter{figure}{0}
\renewcommand{\thefigure}{\supprefix\arabic{figure}}

\setcounter{table}{0}
\renewcommand{\thetable}{\supprefix\arabic{table}}

\renewcommand{\theSdefi}{\supprefix\arabic{Sdefi}}

\renewcommand{\theSlem}{\supprefix\arabic{Slem}}

\renewcommand{\theSthm}{\supprefix\arabic{Sthm}}

\renewcommand{\theScor}{\supprefix\arabic{Scor}}

\renewcommand{\theSrem}{\supprefix\arabic{Srem}}

\setcounter{page}{1}

\makeatletter

\renewcommand{\bibnumfmt}[1]{[S#1]}
\renewcommand{\citenumfont}[1]{S#1}

\begin{adjustwidth}{1.5cm}{1.5cm}

  \colorlet{restated}{purple3}
  %
  \icmltitle{{\large {Supplementary Material}:} \\ \vspace{4pt} \papertitle}
  %
  %
  %

In this supplementary material, we provide (1) all proofs (in \S\ref{app:section:generalization} to \S\ref{app:section:monotonicity}) which were omitted in the main document, 
  as well as (2) full architectural details, hyper-parameters and optimization settings (in \S\ref{app:section:experimentaldetails}).
  Results which are restated from the main manuscript have the same numbering, while
  Definitions, Lemmas, etc., which are only present in the supplementary material have their labels suffixed by an ``S''.
  Additionally, restatements are given in \textcolor{restated}{purple}.
  
  \section{Generalization -- Proof of Lemma \ref{lem:generalization-in-feature-space}}
  \label{app:section:generalization}
  Recall that for
  $h: \mcX \rightarrow \mcY$ and $X \sim P$, and a given labeling function $c:\supp(P) \rightarrow \mcY$,  we define the 
  \emph{generalization error} as
  \[
    \expect\limits_{X \sim P}[\mathbb{1}_{h, c}(X)]\enspace.
  \]
  where
  \[
    \mathbb{1}_{h, c}(x) =
    \begin{cases}
      0, \quad h(x) = c(x), \\
      1, \quad \text{else}\enspace.
    \end{cases}
  \]
  We will now prove the following result of the main manuscript. 
  \textcolor{restated}{
    \lem@generalization@in@feature@space*
  }
  To prove this lemma, we first introduce an auxiliary indicator function in order to deal with possible label overlaps the mapping $\featext$ can impose in $\mcZ$. 
  \begin{Sdefi}
    \label{def:binary-indicator-feat}
    Let $h': \mcZ \rightarrow \mcY$ and $c^{\featext}(z) = c\big(\varphi^{-1}\big(\{z\}\big) \cap \supp(P) \big) \subseteq \mcY$.
    Then, we define
    \[
      \mathbb{1}^{\featext}_{h', c}(z) =
      \begin{cases}
        0, \quad |c^{\featext}(z)| = 1 \text{ and } h'(z) \in c^{\featext}(z), \\
        1, \quad \text{else}\enspace.
      \end{cases}
    \]
  \end{Sdefi}
  Setting $h'=\gamma$, this auxiliary indicator function, $\bb1_{\cls, c}^{\featext}$,  vanishes if and only if all $x \in \mcX$ which are mapped to an internal representation $z \in \mcZ$ \emph{have the same label $c(x)$}. 
  In other words, we \emph{pessimistically} assume that internal representation where this is not the case are falsely classified.
  Thus, the auxiliary indicator function composed with $\varphi$, $\mathbb{1}_{\gamma,c}^{\varphi} \circ \varphi$, has to be greater or equal than the original $\mathbb{1}_{\gamma \circ \varphi, c}$. 
  We formalize this insight next.   
  \begin{Slem}
    \label{lem:generalization-in-feature-space-aux-1}
    It holds that
    \[
      \mathbb{1}_{\cls \circ \featext, c}
      \leq
      \mathbb{1}^{\featext}_{\cls, c} \circ \featext
    \]
  \end{Slem}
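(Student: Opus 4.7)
The plan is to prove this pointwise: since both functions take values in $\{0,1\}$, it suffices to show that for every $x \in \supp(P)$, whenever $(\mathbbm{1}^{\featext}_{\cls, c} \circ \featext)(x) = 0$, we also have $\mathbbm{1}_{\cls \circ \featext, c}(x) = 0$ (i.e., prove the contrapositive at the level of indicators).

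So first I would fix an arbitrary $x \in \supp(P)$, set $z = \featext(x)$, and unpack Definition~\ref{def:binary-indicator-feat} at $z$. The assumption $(\mathbbm{1}^{\featext}_{\cls, c} \circ \featext)(x) = 0$ means two things simultaneously: $|c^{\featext}(z)| = 1$, and $\cls(z) \in c^{\featext}(z)$. The key observation is that $x$ itself lies in $\featext^{-1}(\{z\}) \cap \supp(P)$ (because $z = \featext(x)$ and $x \in \supp(P)$), so by the defining expression $c^{\featext}(z) = c\!\left(\featext^{-1}(\{z\}) \cap \supp(P)\right)$ we get $c(x) \in c^{\featext}(z)$. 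Combined with $|c^{\featext}(z)| = 1$, this forces $c^{\featext}(z) = \{c(x)\}$, and then $\cls(z) \in c^{\featext}(z)$ collapses to $\cls(z) = c(x)$, i.e.\ $(\cls \circ \featext)(x) = c(x)$. By the definition of $\mathbbm{1}_{\cls \circ \featext, c}$, this gives $\mathbbm{1}_{\cls \circ \featext, c}(x) = 0$, as required.

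Since the inequality trivially holds whenever the right-hand side equals $1$ (the left-hand side is at most $1$), we conclude $\mathbbm{1}_{\cls \circ \featext, c}(x) \leq (\mathbbm{1}^{\featext}_{\cls, c} \circ \featext)(x)$ for all relevant $x$, which is the desired pointwise inequality. The only subtlety worth flagging in the write-up is conceptual rather than technical: one must be careful that $c^{\featext}(z)$ is set-valued precisely because $\featext$ may identify points from different classes, and the whole point of the auxiliary indicator is to declare such overlap points as misclassified. The argument above is essentially a one-line verification once this bookkeeping is in place, so I do not anticipate any real obstacle; the proof is primarily a matter of carefully unfolding Definition~\ref{def:binary-indicator-feat}.
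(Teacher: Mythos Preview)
Your proposal is correct and follows essentially the same argument as the paper's proof: both reduce the pointwise inequality to the implication $\mathbbm{1}^{\featext}_{\cls, c}\circ \featext(x) = 0 \Rightarrow \mathbbm{1}_{\cls \circ \featext, c}(x) = 0$, then unpack Definition~\ref{def:binary-indicator-feat} to conclude $\cls(\featext(x)) = c(x)$. If anything, your version is slightly more careful in explicitly tracking the intersection with $\supp(P)$ when invoking $c^{\featext}$.
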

  \begin{proof}
    Let $x \in \mcX$. It is sufficient to show that
    \[
      \mathbb{1}^{\featext}_{\cls, c}\circ \featext(x) = 0
      \Rightarrow
      \mathbb{1}_{\cls \circ \featext, c}(x) = 0
      \enspace.
    \]
    Thus, let $\mathbb{1}^{\featext}_{\cls, c}\circ \featext(x) = 0$.
    Then, by definition
    \begin{enumerate}[label=(\roman*)]
      \item $|c\big(\varphi^{-1}\big(\{\featext(x)\}\big)\big)| = 1$ and, \\
      \item  $\cls(\varphi(x)) \in c\big(\varphi^{-1}\big(\{\featext(x)\}\big)\big)$.
    \end{enumerate}
    By (i) there is some $y \in \mcY$ such that $c\big(\varphi^{-1}\big(\{\featext(x)\}\big)\big)= \{y\}$ and thus $c(x) = y$. 
    With this, and (ii), we get $\cls \circ \featext (x) = y$ and thus $\cls \circ \featext (x) = c(x)$.
    Therefore, $\mathbb{1}_{\cls \circ \featext, c}(x) = 0$ which concludes the proof.
  \end{proof}

  We now have all necessary tools to prove a slightly more general version of Lemma \ref{lem:generalization-in-feature-space} from the main manuscript. 
  \begin{Slem}
    \label{lem:generalization-in-feature-space-general}
    Let for any class $k \in \{1,\ldots,K\}$,  $C_k = \featext\left(c^{-1}\big(\{k\}\big)\right)$ be its internal representation and  $D_k = \cls^{-1}\big(\{k\}\big)$ its decision region in $\mcZ$.
    If
    $$\forall k: 1 - Q_k(D_k) \le \varepsilon_k\enspace,$$
    then
    \[
      \expect\limits_{X \sim P}[\mathbb{1}_{\cls\circ\featext, c}(X)]
      \leq
      \sum\limits_{k=1}^K
      \varepsilon_k
      \enspace.
    \]
  \end{Slem}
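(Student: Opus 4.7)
The plan is to first reduce the generalization error to a statement about the $Q$-mass of a ``bad'' set $E \subseteq \mcZ$ via Lemma~\ref{lem:generalization-in-feature-space-aux-1}, and then exploit the disjointness of the decision regions $\{D_k\}_{k=1}^K$ to obtain a clean union bound. The main obstacle will be translating the pessimistic collision handling of Definition~\ref{def:binary-indicator-feat} into the geometric statement that every bad $z \in D_k$ must already lie in some \emph{other} class representation $C_j$, $j \neq k$.

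First, Lemma~\ref{lem:generalization-in-feature-space-aux-1} combined with the push-forward property of $Q$ gives
\[
\expect\limits_{X\sim P}\bigl[\mathbbm{1}_{\gamma\circ\varphi,\,c}(X)\bigr] \leq \expect\limits_{X\sim P}\bigl[\mathbbm{1}^{\varphi}_{\gamma,\,c}(\varphi(X))\bigr] = \expect\limits_{Z\sim Q}\bigl[\mathbbm{1}^{\varphi}_{\gamma,\,c}(Z)\bigr] = Q(E),
\]
where $E = \{z \in \mcZ : \mathbbm{1}^{\varphi}_{\gamma,\,c}(z) = 1\}$. Thus it suffices to prove $Q(E) \leq \sum_k\sum_{i\neq k}\varepsilon_i$.

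Second, since the decision regions partition $\mcZ$, I split $Q(E) = \sum_{k=1}^K Q(E \cap D_k)$ and establish the inclusion $E \cap D_k \subseteq \bigcup_{j \neq k}(D_k \cap C_j)$, up to a $Q$-null set. To see this, fix $z \in E \cap D_k$ with $c^{\varphi}(z) \neq \emptyset$; the complementary case has $Q$-measure zero, since $P$-a.e.\ $x$ satisfies $c(x) \in c^{\varphi}(\varphi(x))$. Then $\gamma(z) = k$, and by Definition~\ref{def:binary-indicator-feat} either $|c^{\varphi}(z)| > 1$ or $\gamma(z) \notin c^{\varphi}(z)$. In both sub-cases $c^{\varphi}(z)$ contains some $j \neq k$, which by definition of $c^{\varphi}$ means $z \in C_j$.

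Third, the pairwise disjointness of the $D_k$'s gives $D_k \cap C_j \subseteq D_j^{\complement} \cap C_j$ for $j \neq k$. Combined with the definition of $Q_j$ in Eq.~\eqref{eqn:Q_k} and the hypothesis, this yields
\[
Q(D_k \cap C_j) \leq Q(D_j^{\complement} \cap C_j) = Q(C_j)\cdot Q_j(D_j^{\complement}) \leq Q(C_j)\cdot\varepsilon_j \leq \varepsilon_j.
\]
A union bound over $j \neq k$ gives $Q(E \cap D_k) \leq \sum_{j \neq k}\varepsilon_j$, and summing over $k$ delivers the desired inequality. Specializing to $\varepsilon_k \equiv \varepsilon$ immediately recovers Lemma~\ref{lem:generalization-in-feature-space} with the factor $K(K-1)$. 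The only subtlety is the null-set argument for points with empty $c^{\varphi}$-image; everything else is bookkeeping that leverages the disjointness of the $D_k$'s and the normalized-restriction structure of $Q_j$.
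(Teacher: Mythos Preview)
Your proof is correct and follows essentially the same route as the paper's: both reduce to $Q$ via Lemma~\ref{lem:generalization-in-feature-space-aux-1}, partition by the decision regions $D_k$, establish that the ``bad'' points in $D_k$ must lie in some $C_j$ with $j\neq k$, and then use $D_k\cap C_j\subseteq C_j\setminus D_j$ together with the definition of $Q_j$ to land on $\varepsilon_j$. The only cosmetic difference is that the paper restricts the integration domain to $\varphi(\supp(P))$ explicitly (via the sets $D_k^{\cap}$), whereas you achieve the same effect by your null-set observation that $\{z:c^{\varphi}(z)=\emptyset\}$ has $Q$-measure zero.
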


  \begin{proof}
    For brevity, let 
    $\widehat{C}_k = c^{-1}\big(\{k\}\big)$ and 
    write $\bb1_{\gamma\circ\featext}$ instead of $\bb1_{\gamma\circ\featext, c}$.
    Then, we get
    \begin{align*}
      \expect\limits_{X \sim P}[\bb1_{\cls \circ \featext}(x)]
       & =
      \int\limits_{\mcX} \bb1_{\cls \circ \featext}(x) dP(x)      \\
       & =
      \int\limits_{\supp(P)} \bb1_{\cls \circ \featext}(x) dP(x)  \\
       & \leq
      \int\limits_{\supp(P)} \bb1^{\featext}_{\cls}\circ \featext (x) dP(x)
      \tag{by Lemma \ref{lem:generalization-in-feature-space-aux-1}} \\
       & =
      \int\limits_{\featext\left(\supp(P)\right)} \bb1^{\featext}_{\cls} (z) d Q(z)
      \tag{change of variables}                                    \\
       & =
      \sum\limits_{k=1}^K
      \int\limits_{\underbrace{\featext\left(\supp(P)\right)\cap D_k}_{ D_k^{\cap}}}
      \bb1^{\featext}_{\cls} (z) d Q(z)
      \tag{as $D_1,\ldots,D_K$ partition $\mcZ$}
      \enspace.
    \end{align*}

    For $1 \leq k \leq K$, let $D_k^{\cap} = \featext\left(\supp(P)\right)\cap D_k$ and consider each summation term separately.
    First, we can re-write $D_k^{\cap}$ as 
    $$
      D_k^{\cap} =
      \left(D_k^{\cap} \setminus \bigcup\limits_{\substack{i=1 \\ i \neq k}}^K C_i\right)
      \cup
      \left(D_k^{\cap} \cap \bigcup\limits_{\substack{i=1 \\ i \neq k}}^K C_i\right)
      \enspace.
    $$
    Second, consider
    \[
      z \in
      D_k^{\cap} \setminus \bigcup\limits_{\substack{i=1 \\ i \neq k}}^K C_i
      \enspace.
    \]
    Then, $z \in D_k$ and thus $\cls(z) = k$ (by definition of $D_k$).
    Further, we have $c\big(\varphi^{-1}\big(\{z\}\big) \cap \supp(P) \big) = \{k\}$ and thus,
    by Definition \ref{def:binary-indicator-feat},
    \begin{equation}
      \label{lem:generalization-in-feature-space:eq-1}
      \bb1^{\featext}_{\cls} (z) = 0
      \enspace.
    \end{equation}
    With this, considering each summation term yields
    \begin{align*}
      \int\limits_{ D_k^{\cap}}
      \bb1^{\featext}_{\cls} (z) d Q(z)
       & =
      \int\limits_{D_k^{\cap} \setminus \bigcup\limits_{\substack{i=1                  \\ i \neq k}}^K C_i}
      \bb1^{\featext}_{\cls} (z) d Q(z)
      +
      \int\limits_{D_k^{\cap} \cap \bigcup\limits_{\substack{i=1                       \\ i \neq k}}^K C_i}
      \bb1^{\featext}_{\cls} (z) d Q(z)                                                \\
       & =
      \int\limits_{D_k^{\cap} \cap \bigcup\limits_{\substack{i=1                       \\ i \neq k}}^K C_i}
      \bb1^{\featext}_{\cls} (z) d Q(z)
      \tag{by Eq.~\eqref{lem:generalization-in-feature-space:eq-1}}                    \\
       & =
      \int\limits_{\bigcup\limits_{\substack{i=1                                       \\ i \neq k}}^K (C_i \cap D_k^{\cap})}
      \bb1^{\featext}_{\cls} (z) d Q(z)                                                \\
       & \leq
      Q\left(\bigcup\limits_{\substack{i=1                                             \\ i \neq k}}^K (C_i \cap D_k^{\cap})\right)
      \tag{as $\bb1_{\gamma}^{\varphi} \leq 1$}\\
       & \leq
      Q\left(\bigcup\limits_{\substack{i=1                                             \\ i \neq k}}^K (C_i \cap D_k)\right)
      \tag{as $C_i \subset \varphi(\supp(P))$ }\\
       & \leq
      \sum\limits_{\substack{i=1                                             \\ i \neq k}}^K Q\left(C_i \cap D_k\right)
    \end{align*}
    In order to obtain the final result, we use the fact that the decision regions $D_k$ are disjoint and cover the representation space, i.e. $ \mcZ = \bigsqcup_{k=1}^K D_k$. Thus for any $1\le i\le K$,
  \begin{align}
  \label{lem:generalization-in-feature-space:eq-2}
  Q(C_i) = \sum\limits_{k=1}^K Q\left(C_i\cap D_k \right)
  \enspace.
  \end{align}
    Consequently, changing the summation order allows to simplify the bound from above to
    \begin{align*}
    \expect\limits_{X \sim P}[\bb1_{\cls \circ \featext}(x)]
     & \leq
     \sum\limits_{k=1}^K
     \sum\limits_{\substack{i=1                                             \\ i \neq k}}^K Q\left(C_i \cap D_k\right)
      =
     \sum\limits_{i=1}^K
     \sum\limits_{\substack{k=1                                             \\ k \neq i}}^K Q\left(C_i \cap D_k\right) 
     \\
     & =
     \sum\limits_{i=1}^K \left( Q(C_i) - Q(C_i \cap D_i) \right)
     \tag{by Eq.~\eqref{lem:generalization-in-feature-space:eq-2}}
     \\
     &=
     \sum\limits_{i=1}^K Q(C_i) \left(1 - \frac{Q(C_i \cap D_i)} {Q(C_i)} \right)
     \\
     &=
     \sum\limits_{i=1}^K 
     Q(C_i) \left (1 - Q_i(D_i) \right)
     \tag{$*$}
     \\
     &\leq 
     \sum\limits_{i=1}^K Q(C_i) \varepsilon_i
     \tag{$**$}
     \\
     & \leq 
     \sum\limits_{i=1}^K \varepsilon_i \enspace,
    \end{align*}
    where ($*$) follows from the definition of the class-specific probability mass in Eq. \eqref{eqn:Q_k} and ($**$) holds by the assumption of the lemma.
  \end{proof}
  \begin{proof}[Proof of Lemma \ref{lem:generalization-in-feature-space}]
    By setting $\varepsilon_k = \varepsilon > 0 $ in Lemma \ref{lem:generalization-in-feature-space-general} we get the desired result.
  \end{proof}
  \section{A sufficient condition on \emph{not}-$\beta$-connectivity}
  \begin{Sdefi}
    Let $(\mcZ, \mathbb{d})$ be a metric space and $\emptyset \neq A, B \subset \mcZ$.
    We define the \emph{set margin} between $A$ and $B$ as
    \[
      \mathbb{m}(A, B) =
      \inf\limits_{a \in A, b \in B} \mathbb{d}(a, b)
      \enspace.
    \]
  \end{Sdefi}
  The following lemma formalizes the intuition that $z_1,\ldots,z_b$ cannot be $\beta$-connected if
  the $z_i$ are distributed among two sets which are separated by a sufficiently large set margin.
  \begin{Slem}
    \label{lem:sufficient-for-not-connectted}
    Let $(\mcZ, \mathbb{d})$ be a metric space and $\mbfz= (z_1,\ldots,z_b) \in \mcZ^b$.
    Define, for $l \in \N$ and $A, B \subset \mcZ$, the index sets
    \begin{align*}
      I_{A, \mbfz}  & = \{i \in [b]: z_i \in A\}, \quad \\
      I_{B, \mbfz}  & = \{i \in [b]: z_i \in B\}, \text{and} \quad \\
      I_{C, \mbfz}  & = \{i \in [b]: z_i \in (A\cup B)^{\complement}\}\enspace,
    \end{align*}
    where $[b] = \{1, \dots, b\}$ and $(A\cup B)^{\complement}$ denotes the set complement of $(A\cup B)$. 
    
    If
    \[
      \mathbb{m}(A, B) \geq l \cdot \beta
      \text{ and }
      |I_{A, \mbfz}|,~|I_{B, \mbfz}| \geq 1 \text{ and } |I_{C, \mbfz}| \leq l - 1
    \]
    then
    \[
      \indc_b^{\beta}(\mbfz) = 0\enspace,
    \]
    \ie, $\mbfz$ is not $\beta$-connected.
  \end{Slem}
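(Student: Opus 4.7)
The plan is to translate $\beta$-connectedness into an MST/graph connectivity statement and then show that the hypotheses force the relevant graph to be disconnected. Concretely, let $G_\beta$ be the graph on $\{z_1,\ldots,z_b\}$ whose edges are exactly those pairs $\{z_i,z_j\}$ with $\mathbbm{d}(z_i,z_j)<\beta$. Since all information in $0$-dimensional Vietoris--Rips persistent homology is encoded in the MST of the point set (the death-times are exactly the MST edge weights, by \citet{Robins00a}), $\mbfz$ is $\beta$-connected iff every MST edge has weight strictly less than $\beta$, iff $G_\beta$ is connected. Hence it suffices to show that under the hypotheses, $G_\beta$ is disconnected; in fact, I will show there is no path in $G_\beta$ from any point of $A$ to any point of $B$, and since both $I_{A,\mbfz}$ and $I_{B,\mbfz}$ are nonempty this yields at least two components.

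Assume for contradiction there is a simple path $z_{i_0},z_{i_1},\ldots,z_{i_n}$ in $G_\beta$ with $z_{i_0}\in A$ and $z_{i_n}\in B$. Set $s=\max\{j:z_{i_j}\in A\}$ and $t=\min\{j>s:z_{i_j}\in B\}$; both exist because of the endpoints. By the choice of $s$, none of $z_{i_{s+1}},\ldots,z_{i_{t-1}}$ lies in $A$, and by the choice of $t$ none of them lies in $B$, so they all lie in $(A\cup B)^{\complement}$. Since the path is simple these indices are distinct, so
\[
t-s-1 \leq |I_{C,\mbfz}| \leq l-1,
\]
i.e.\ the subpath from $z_{i_s}$ to $z_{i_t}$ uses at most $l$ edges. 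Each of those edges has length strictly less than $\beta$ because it lives in $G_\beta$, so by the triangle inequality
\[
\mathbbm{d}(z_{i_s},z_{i_t}) \;\leq\; \sum_{j=s}^{t-1}\mathbbm{d}(z_{i_j},z_{i_{j+1}}) \;<\; l\cdot\beta.
\]
On the other hand $z_{i_s}\in A$ and $z_{i_t}\in B$, so $\mathbbm{d}(z_{i_s},z_{i_t})\geq \mathbbm{m}(A,B)\geq l\cdot\beta$, a contradiction. Therefore no such path exists, $G_\beta$ is disconnected, and the MST of $\mbfz$ must contain at least one edge of length $\geq\beta$, giving a death-time outside the open interval $(0,\beta)$; by Definition~\ref{defi:beta-connected} this means $\indc_b^{\beta}(\mbfz)=0$.

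The only real obstacle I anticipate is cleanly justifying the passage from ``graph $G_\beta$ disconnected'' to ``some MST edge has weight $\geq\beta$''; this is a standard cut/greedy argument for MSTs (any spanning tree must contain an edge crossing the cut between two components of $G_\beta$, and all such edges have weight $\geq\beta$), and can be cited or spelled out in one line. Everything else reduces to the straightforward triangle-inequality bookkeeping on the subpath isolated by $s$ and $t$.
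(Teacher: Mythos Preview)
Your proof is correct and follows essentially the same contradiction argument as the paper: assume $\beta$-connectedness, extract a path in the $\beta$-graph from a point of $A$ to a point of $B$, isolate a subpath whose interior lies entirely in $(A\cup B)^{\complement}$, bound its number of edges by $l$ via $|I_{C,\mbfz}|\le l-1$, and obtain a contradiction with $\mathbbm{m}(A,B)\ge l\beta$ through the triangle inequality. The only cosmetic difference is that you make the graph $G_\beta$ and the subpath extraction (via your indices $s,t$) explicit, whereas the paper simply asserts the existence of the subpath; your worry about the MST cut argument is unnecessary since the paper takes the equivalence ``$\beta$-connected $\Leftrightarrow$ all MST edges $<\beta$'' as the working characterization from the outset.
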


  \begin{proof}
    We prove this by way of contradiction. 
    For brevity, let $\indc_b^{\beta} = \indc$.
    Consider $\mbfz = (z_1,\ldots,z_b)$ as above and assume $\indc(\mbfz) = 1$.
    Let, w.l.o.g., $z_1 \in A$ and $z_b \in B$.
    Then, there is a path of distinct nodes
    \[
      z_1 \leftrightarrow \dots \leftrightarrow z_b
    \]
    connecting $z_1$ and $z_b$ with line segments of length $< \beta$.
    However, by assumption $\mathbb{m}(A,B) \geq \beta$, and thus there is a sub-path
    \[
      z_{i_1} \leftrightarrow z_{i_2} \leftrightarrow \dots \leftrightarrow z_{i_m}
    \]
    such that $z_{i_1} \in A$, $z_{i_m} \in B$ and $z_{i_2}, \dots, z_{i_{m-1}} \in (A \cup B)^{\complement}$.
    Thus, we get
    \begin{align*}
      \mathbb{d}(z_{i_1}, z_{i_m})  \leq
      \mathbb{d}(z_{i_1}, z_{i_2})
      + \dots +
      \mathbb{d}(z_{i_{m-1}}, z_{i_m})
      <
      (m-1) \cdot \beta\enspace.
    \end{align*}
    By construction, $z_{i_2},\ldots,z_{i_{m-1}} \in C$ and thus $\{i_2,\ldots,i_{m-1}\} \subseteq I_{C,\mbfz}$. Hence, $m - 2 \leq |I_{C,\mbfz}| \leq l - 1$. 
    Therefore, $(m-1)\cdot \beta \leq l\cdot \beta$, leading to
    \[
      \mathbb{d}(z_{i_1}, z_{i_m})
      <
      l\cdot \beta\enspace.
    \]
    This directly contradicts $\mathbb{m}(A, B) \geq l \cdot \beta$.
  \end{proof}
  \section{Concentration results}

  %
  \begin{Slem}
    \label{lem:conectivity-impact-core}
    Let $(\mcZ, \mathbb{d})$ be a metric space and $\Sigma$ the corresponding Borel $\sigma$-algebra.
    Further, let $b \in \N$, $\beta > 0$ and $Q$ be a $(b,c_{\beta})$-connected probability measure (cf. Definition~\ref{defi:bbetac}) 
    on $(\mcZ, \Sigma)$.
  For $l \in \N$ and $A, B \in \Sigma$, such that
    $\mathbb{m}(A, B) \geq l \cdot \beta$, the following inequality holds
    \begin{align*}
      1-c_{\beta}  \geq
      \sum\limits_{\substack{(n_1, n_2, n_3) \in I(b, l)}}
      \frac{b!}{n_1! n_2! n_3!}
      \cdot Q(A)^{n_1}Q(B)^{n_2}(1 - Q(A) - Q(B))^{n_3}\enspace,
    \end{align*}
    where the index set $I(b,l)$ is given as
    \begin{align}
      \label{lem:connectivity-impact-core:eq:index-set}
      I(b, l) =
      \left\{\right.
      (n_1, n_2, n_3) \in \{0, \dots, b\}^3:
      n_1 + n_2 + n_3 = b,\ 1 \leq n_1,\ 1 \leq n_2,\ n_3 \leq l - 1
      \left.\right\}
      \enspace.
    \end{align}
  \end{Slem}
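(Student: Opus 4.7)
The plan is to construct a specific event in $\mcZ^b$ that, by Lemma~\ref{lem:sufficient-for-not-connectted}, forces \emph{not-$\beta$-connectedness}, compute its $Q^b$-mass exactly via a multinomial expansion, and then invoke the $(b, c_\beta)$-connectedness assumption to turn that mass into a lower bound for $1 - c_\beta$.

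First, since $\mathbbm{m}(A, B) \geq l\cdot\beta > 0$, the sets $A$, $B$, and $C = (A \cup B)^{\complement}$ are pairwise disjoint and partition $\mcZ$. For an i.i.d. sample $\mbfz = (z_1, \dots, z_b) \sim Q^b$, the counts
\[
(|I_{A,\mbfz}|, |I_{B,\mbfz}|, |I_{C,\mbfz}|) \in \{0,\dots,b\}^3
\]
therefore follow a trinomial distribution with parameters $Q(A)$, $Q(B)$, and $1 - Q(A) - Q(B)$.

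Next, I would define the event
\[
E = \bigl\{\mbfz \in \mcZ^b: |I_{A,\mbfz}| \geq 1,\ |I_{B,\mbfz}| \geq 1,\ |I_{C,\mbfz}| \leq l - 1 \bigr\}.
\]
By Lemma~\ref{lem:sufficient-for-not-connectted}, every $\mbfz \in E$ satisfies $\indc_b^\beta(\mbfz) = 0$, so $E \subseteq \{\indc_b^\beta = 0\}$. Decomposing $E$ according to the values of the triple of counts and applying the trinomial formula gives
\[
Q^b(E) = \sum_{(n_1,n_2,n_3) \in I(b,l)} \frac{b!}{n_1!\, n_2!\, n_3!}\, Q(A)^{n_1} Q(B)^{n_2}\bigl(1 - Q(A) - Q(B)\bigr)^{n_3},
\]
with $I(b,l)$ exactly as defined in the statement.

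Finally, the $(b, c_\beta)$-connectedness hypothesis (Definition~\ref{defi:bbetac}) gives $Q^b(\{\indc_b^\beta = 1\}) \geq c_\beta$, hence $Q^b(\{\indc_b^\beta = 0\}) \leq 1 - c_\beta$. Combining with $E \subseteq \{\indc_b^\beta = 0\}$ yields the claimed inequality. The one point requiring care (and the only real obstacle) is verifying the set-theoretic inclusion $E \subseteq \{\indc_b^\beta = 0\}$ under the given geometric hypothesis on $A$, $B$; that is precisely the content of Lemma~\ref{lem:sufficient-for-not-connectted}, so once that lemma is available, the rest is bookkeeping with the multinomial distribution.
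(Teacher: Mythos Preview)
Your proposal is correct and follows essentially the same approach as the paper: partition $\mcZ$ into $A$, $B$, $C=(A\cup B)^\complement$, observe that the count vector is trinomial, invoke Lemma~\ref{lem:sufficient-for-not-connectted} to obtain $E\subseteq\{\indc_b^\beta=0\}$, and conclude via the $(b,c_\beta)$-connectedness hypothesis. The paper organizes this into three labeled parts but the argument is identical.
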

  \begin{proof}
    Let $C = (A \cup B)^{\complement}$.
    The proof is structured in \emph{three parts}: First, we construct an auxiliary random 
  variable that captures the scattering of $b$-sized samples across $A$, $B$ and $C$. 
  This allows us to express probabilities for different scattering configurations.
  Second, we describe scattering configurations where $\beta$-connectivity cannot 
  be satisfied. Finally, by combining both previous parts, we derive the claimed 
  inequality.
  
    {\bf Part I.}
    First, define the categorical function 
    \begin{equation*}
      f: \mcZ \to \{1,2,3\}, \quad f(z) =
      \begin{cases}
        1, \quad z \in A, \\
        2, \quad z \in B, \\
        3, \quad z \in C = (A \cup B)^{\complement}
        \enspace.
      \end{cases}
    \end{equation*}
    For a random variable $Z \sim Q$, we now consider the random variable $f \circ Z$ with
    \begin{align*}
      Q(\{f \circ Z = 1\}) & = Q(A) \\
      Q(\{f \circ Z = 2\}) & = Q(B) \\
      Q(\{f \circ Z = 3\}) & = Q(C)
      \enspace.
    \end{align*}
    By drawing $b$-times i.i.d. from $f \circ Z$ and counting the occurrences of $1,2,3$, we get a multinomially distributed random variable, $K$. 
    This means that, for    
    \begin{equation*}
      \{K = (n_1, n_2, n_3)\}
    \end{equation*}
    where $n_1 + n_2 + n_3 = b$, 
    it holds that
    \begin{equation*}
      Q^b\left(\{K = (n_1, n_2, n_3)\}\right) =
      \frac{b!}{n_1! n_2! n_3!}
      \cdot Q(A)^{n_1}Q(B)^{n_2}(1 - Q(A) - Q(B))^{n_3}
      \enspace.
    \end{equation*}

    {\bf Part II}. Similar to Lemma \ref{lem:sufficient-for-not-connectted}, we define 
    \begin{align*}
      I_{A, \mbfz}  & = \{i \in I: z_i \in A\}, \\
      I_{B, \mbfz}  & = \{i \in I: z_i \in B\}, \text{and}\\
      I_{C, \mbfz}  & = \{i \in I: z_i \in (A\cup B)^{\complement}\}\enspace.
    \end{align*}
    Then, by construction, it holds that
    \begin{equation*}
      \big\{K = (n_1, n_2, n_3)\big\}
      =
      \left\{
      \mbfz \in \mcZ^b:
      \big(| I_{A, \mbfz} |, | I_{B, \mbfz} |, | I_{C, \mbfz} |\big) = (n_1, n_2, n_3)
      \right\}
      \enspace.
    \end{equation*}
    Now let $(n_1, n_2, n_3) \in I(b, l)$ and consider $\mbfz \in \{K= (n_1, n_2, n_3)\}$.
    By definition of $I(b,l)$, we get
    \begin{equation*}
      1 \leq |I_{A, \mbfz}|  ,\quad
      1 \leq |I_{B, \mbfz} |, \quad \text{and }\quad
      |I_{C, \mbfz}|  \leq l - 1
      \enspace.
    \end{equation*}
    Remember that, by assumption, $\mathbb{m}(A, B) \geq l\cdot\beta$ and thus, by Lemma \ref{lem:sufficient-for-not-connectted},
    $\mbfz \in \{\indc =0 \}$, \ie,
    the points $z_1,\ldots,z_b$ are not $\beta$-connected.
    This yields the following implication:
    \begin{equation*}
      (n_1,n_2,n_3) \in I(b,l) \Rightarrow \{K= (n_1, n_2, n_3)\} \subseteq \{\indc = 0\}\enspace.
    \end{equation*}

    {\bf Part III}. Combining the results of Part I and II, we obtain the following inequality:
    \begin{align*}
      1 - c_{\beta} & = Q^b(\{\indc = 0\})      \\
                    & \geq
      Q^b
      \left(
      \bigcup\limits_{\substack{(n_1, n_2, n_3) \\ \in I(b, l)}}
      \{K = (n_1, n_2, n_3)\}
      \right)                                   \\
                    & =
      \sum\limits_{\substack{(n_1, n_2, n_3)    \\   \in I(b, l)}}
      Q^b\big(\{K = (n_1, n_2, n_3)\}\big)   \\
                    & =
      \sum\limits_{\substack{(n_1, n_2, n_3)    \\ \in I(b, l)}}
      \frac{b!}{n_1! n_2! n_3!}
      \cdot Q(A)^{n_1}Q(B)^{n_2}(1 - Q(A) - Q(B))^{n_3}
      \enspace.
    \end{align*}
  \end{proof}
  With Lemma~\ref{lem:conectivity-impact-core} in mind, we can restate the definition 
  of the polynomial $\Psi$ from the main manuscript.
  \textcolor{restated}{\restatable@defi@psifunction*}

  While all previous results in this supplementary material are stated for a 
  probability measure $Q$ on $\mathcal{Z}$, the results equally transfer
  to $Q_k$, \ie, the restriction of probability measure $Q$ to a particular class $k$, 
  which is the specific setting considered in the main part of the manuscript. 
  \textcolor{restated}{\restatable@thm@psifunction*}
  \begin{proof}
    The proof relies on Lemma \ref{lem:conectivity-impact-core} with
    \[
      A = M\text{ and  } B = (M_{l \cdot \beta})^{\complement}\enspace.
    \]
    For $Q_k$ as $Q$, we get 
    \begin{align*}
    1 - c_{\beta} 
    & \geq 
    \sum\limits_{\substack{(n_1, n_2, n_3)    \\ \in I(b, l)}}
      \frac{b!}{n_1! n_2! n_3!}
      \cdot Q(A)^{n_1}Q(B)^{n_2}(1 - Q(A) - Q(B))^{n_3}\\
      &=
      \sum\limits_{\substack{(n_1, n_2, n_3)    \\ \in I(b, l)}}
      \frac{b!}{n_1! n_2! n_3!}
      \cdot Q(M)^{n_1}\big(1 - Q(M_{l \cdot \beta})\big)^{n_2}\big(1 - Q(M) - (1 - Q(M_{l \cdot \beta}))\big)^{n_3}\\
      &=
      \sum\limits_{\substack{(n_1, n_2, n_3)    \\ \in I(b, l)}}
      \frac{b!}{n_1! n_2! n_3!}
      \cdot Q(M)^{n_1}\big(1 - Q(M_{l \cdot \beta})\big)^{n_2}\big(Q(M_{l \cdot \beta})-Q(M)\big)^{n_3}\\
      &=
      \sum\limits_{\substack{(u, v, w)    \\ \in I(b, l)}}
      \frac{b!}{u! v! w!}
      \cdot p^{u}(1 - q)^{v}(q-p)^{w}\enspace,
    \end{align*}
    where, in the last equality, we have set $p = Q(M)$, $q=Q(M_{l \cdot \beta})$ and 
    renamed the indices. 
  \end{proof}
  \clearpage
  \section{Properties of $\Psi$}
  In the main manuscript, we list three important properties of $\Psi$. These are:
  \begin{enumerate}[label=(\arabic*)]
    \item $\Psi$ is \emph{monotonically increasing} in $p$, 
    \item $\Psi$ is \emph{monotonically decreasing} in $q$, and
    \item $\Psi$ is \emph{monotonically increasing} in $l$ and $\Psi$ vanishes for $q=1$.
  \end{enumerate}
  While the latter trivially follows from Definition~\ref{def:psi-function}, 
  as $(1-q) = (1-1) = 0$ is always present with non-zero exponent, properties (1) and (2) need more careful (tedious) consideration.
  The monotonicity in $l$ results from the fact that increasing $l$ increases the size of $I(b,l)$ and thus more non-negative
  terms are present in the summation in $\Psi$ (see Definition~\ref{def:psi-function}) .
  We start by providing two beneficial ways of re-writing the index set, $I(\cdot, \cdot)$, which is used to define $\Psi$. 
  \begin{Slem}
    \label{lem:psi-function-alternative-indices}
    Let $b, l \in \N$ and define
    \[
      g(x) = \max \{1, b -x -l +1\}\enspace.
    \]
    This yields the following re-write of the index set as follows:
    \begin{equation}
      \begin{gathered}
        I(b,l) = \left\{
        (n_1, n_2, n_3) \in \{0, \dots, b\}^3:
        n_1 + n_2 + n_3 = b, 1 \leq n_1, 1 \leq n_2, n_3 \leq l - 1
        \right\}\\
        =\\
        \bigcup\limits_{n_1=1}^{b-1}
        \left\{
        (n_1, n_2, b - n_1 - n_2): g(n_1) \leq n_2 \leq b- n_1
        \right\}\\
        =\\
        \bigcup\limits_{n_2=1}^{b-1}
        \left\{
        (n_1, n_2, b - n_1 - n_2): g(n_2) \leq n_1 \leq b- n_2
        \right\}\\
      \end{gathered}
    \end{equation}
     \end{Slem}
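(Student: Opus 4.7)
The plan is to verify both equalities by direct double inclusion, using the sum constraint $n_1 + n_2 + n_3 = b$ to eliminate $n_3$ as an independent parameter. Every element of $I(b,l)$ is then determined by its first two coordinates via $n_3 = b - n_1 - n_2$, so the re-writes simply reorganize the enumeration by slicing on $n_1$ (respectively on $n_2$).

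First I would establish the re-write indexed by $n_1$. The outer range $n_1 \in \{1, \dots, b-1\}$ is justified as follows: the lower bound $n_1 \geq 1$ is part of the definition of $I(b,l)$, while the upper bound $n_1 \leq b-1$ follows from $n_2 \geq 1$ and $n_3 \geq 0$ combined with $n_1+n_2+n_3 = b$. For fixed $n_1$ in this range, I would translate each remaining constraint of $I(b,l)$ into a constraint on $n_2$: (i) $n_2 \geq 1$ is inherited directly; (ii) $n_3 = b - n_1 - n_2 \geq 0$ yields $n_2 \leq b - n_1$; (iii) $n_3 \leq l-1$ yields $n_2 \geq b - n_1 - l + 1$. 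Combining the lower bounds from (i) and (iii) gives $n_2 \geq \max\{1,\, b - n_1 - l + 1\} = g(n_1)$, while (ii) supplies the upper bound. Conversely, any triple of the form $(n_1, n_2, b - n_1 - n_2)$ satisfying these bounds manifestly lies in $I(b,l)$, completing the equality.

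The second re-write, indexed by $n_2$, follows from an identical argument after swapping the roles of $n_1$ and $n_2$, since the defining conditions of $I(b,l)$ are symmetric in these two coordinates. No further work is needed.

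I do not anticipate any substantive obstacle. The only subtlety is the edge case where $g(n_1) > b - n_1$, in which the corresponding inner set is simply empty and contributes nothing to the union; and the observation that the parameterization is injective in the outer index, so the slices of the union are disjoint and no tuple is double-counted.
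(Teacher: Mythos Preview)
Your proposal is correct and follows essentially the same route as the paper: a double inclusion argument that eliminates $n_3$ via the sum constraint, translates the remaining conditions into the bounds $g(n_1)\le n_2\le b-n_1$, and then invokes the $n_1\leftrightarrow n_2$ symmetry for the second re-write. The paper spells out the converse direction with an explicit case split on whether $g(n_1)=1$ or $g(n_1)=b-n_1-l+1$ to verify $n_3\le l-1$, which you subsume under ``manifestly'', but the underlying reasoning is identical.
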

  \begin{proof}
    We only show the first equality as the second is analogous with switched roles of $n_1$ and $n_2$.

    \textbf{Part I} ($\boldsymbol{\subseteq}$):
    Let $(k, i, j) \in I(b,l)$, \ie, $(k,i,j) \in \{0, \dots, b\}^3$ such that
    \begin{align*}
      \text{(1)   }&\quad k + i + j = b,\quad \\
      \text{(2)   }&\quad 1\leq k, \quad \\
      \text{(3)   }&\quad 1 \leq i\,\text{, and }\quad \\
      \text{(4)   }&\quad j \leq l-1 \enspace.
    \end{align*}
    From (1) it follows that $j = b - k - i$. \\
    From (4) we get
    \[
      b - k - i \stackrel{\text{1)}}{=} j \stackrel{\text{(4)}}{\leq} l - 1
      \Leftrightarrow
      b -k -l + 1 \leq i\enspace.
    \]
    Combining this with (3), we conclude
    \[
      g(k) = \max\{1, b -k -l+1\} \leq i
      \enspace.
    \]
    From (1), we see that $i \leq b-k$, as $i+j = b-k$ and $j>0$. This means 
    \begin{equation}
      (k, i, j) \in \left\{
      (k, n_2, b - k - n_2): g(k) \leq n_2 \leq b-k
      \right\}
      \enspace.
    \end{equation}
    Finally, (1), (2), and (3) yield $1 \leq k \leq b-1$ and therefore
    \begin{equation}
      (k, i, j)
      \in
      \bigcup\limits_{n_1=1}^{b-1}
      \left\{
      (n_1, n_2, b - n_1 - n_2): g(n_1) \leq n_2 \leq b-k
      \right\}
    \end{equation}
    which concludes the ``$\subseteq$'' part.

    \textbf{Part II} ($\boldsymbol{\supseteq}$):
    Let $1 \leq n_1 \leq b-1$ and consider
    \begin{equation}
      (k, i, j)
      \in
      \left\{
      (n_1, n_2, b - n_1 - n_2): g(n_1) \leq n_2 \leq b-k
      \right\}
      \enspace.
    \end{equation}
    Then, $k + i + j = b$ and $1 \leq k, 1 \leq i$. \\
    For the last condition, \ie, $j \leq l - 1$, consider
    \[
      j = b -k - i \leq b - k - g(k)\enspace.
    \]
    We next distinguish the two possible outcomes of $g(k)$:

    {\underline{Case 1}: $g(k) = b -k -l + 1$}: Then, we get
    \begin{align*}
    j & \leq b - k - g(k) \\
      & = b -k - (b -k -l +1)\\
      & = l - 1\enspace.
    \end{align*}

    {\underline{Case 2}: $g(k) = 1$}: Then, by definition of $g(k)$, we get
    \begin{align*}
             b -k -l +1 & \leq 1 \\
    \Leftrightarrow b -k & \leq l \\
    \Leftrightarrow b-k-1 & \leq l-1
    \end{align*}
    and therefore
    \begin{align*}
    j & \leq b - k - g(k) \\
      & = b-k-1\\
      & \leq l - 1\enspace.
    \end{align*}
  \end{proof}
  We now use the results of Lemma~\ref{lem:psi-function-alternative-indices} to re-arrange the sum in the definition of $\Psi$.
  \begin{Scor}
    \label{cor:psi-indexing}
    For $g(x) = \max\{1, b-x-l+1\}$, it holds that
    \begin{equation}
      \label{cor:psi-indexing-eq:1}
      \Psi(p, q \parasep b, l) =
      \sum\limits_{n_1 = 1}^{b-1}
      \sum\limits_{n_2 = g(n_1)}^{b-n_1}
      \frac{b!}{n_1! n_2! (b-n_1-n_2)!}
      p^{n_1}
      (1-q)^{n_2}
      (q-p)^{b-n_1 - n_2}
    \end{equation}
    and
    \begin{equation}
      \label{cor:psi-indexing-eq:2}
      \Psi(p, q \parasep b, l) =
      \sum\limits_{n_2 = 1}^{b-1}
      \sum\limits_{n_1 = g(n_2)}^{b-n_2}
      \frac{b!}{n_1! n_2! (b-n_1-n_2)!}
      p^{n_1}
      (1-q)^{n_2}
      (q-p)^{b-n_1 - n_2}\enspace.
    \end{equation}
  \end{Scor}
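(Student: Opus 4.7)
The plan is to derive both equalities directly by substituting the alternative descriptions of the index set $I(b,l)$ from Lemma~\ref{lem:psi-function-alternative-indices} into the defining sum for $\Psi$. Since Lemma~\ref{lem:psi-function-alternative-indices} has already done the combinatorial work of rewriting $I(b,l)$ as a disjoint union indexed first by $n_1$ and second by $n_2$, what remains is essentially bookkeeping.

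First I would recall that, by Definition~\ref{def:psi-function},
\[
\Psi(p,q\parasep b,l) = \sum_{(u,v,w)\in I(b,l)} \frac{b!}{u!\,v!\,w!}\, p^{u}(1-q)^{v}(q-p)^{w}.
\]
Applying the first decomposition from Lemma~\ref{lem:psi-function-alternative-indices}, namely
\[
I(b,l) = \bigcup_{n_1=1}^{b-1}\bigl\{(n_1,n_2,b-n_1-n_2): g(n_1)\le n_2\le b-n_1\bigr\},
\]
and observing that this union is disjoint (each triple $(u,v,w)$ with $u+v+w=b$ is uniquely determined by its first two coordinates), I can split the single sum into a double sum over $n_1$ and $n_2$, and substitute $w=b-n_1-n_2$. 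This yields Eq.~\eqref{cor:psi-indexing-eq:1}.

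For Eq.~\eqref{cor:psi-indexing-eq:2}, I would repeat the same argument using the symmetric decomposition provided by Lemma~\ref{lem:psi-function-alternative-indices}, where $I(b,l)$ is reindexed by $n_2$ first with $n_1$ ranging from $g(n_2)$ to $b-n_2$. Again the union is disjoint, so the substitution is immediate. There is no real obstacle here; the only thing to be mindful of is ensuring the index $w=b-n_1-n_2$ is nonnegative throughout the ranges, which is guaranteed by $n_2\le b-n_1$ (respectively $n_1\le b-n_2$) in the inner sums. Hence no term contributes a negative power, and the resulting expressions are well-defined polynomials matching $\Psi$.
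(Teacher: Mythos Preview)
Your proposal is correct and matches the paper's approach exactly: the corollary is stated as an immediate consequence of Lemma~\ref{lem:psi-function-alternative-indices}, obtained by substituting the two alternative index-set decompositions into Definition~\ref{def:psi-function}. The paper does not even spell out a separate proof, so your argument is, if anything, slightly more detailed than what the authors deemed necessary.
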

  In the following lemma we will prove the claimed monotonicity properties of $\Psi$ by (i) using the previously derived rearrangements of the summation and 
  (ii) considering the corresponding derivatives. 
  \begin{Slem}
    \label{lem:psi-function-monotony}
    Let $b, l \in \N$ and $p_0, q_0 \in (0, 1)$ arbitrary but fixed.
    Then, it holds that
    \begin{equation*}
      \label{lem:psi-function-monotony-eq:p}
      \text{\it (1) }
      \Psi(\cdot, q_0 \parasep b, l) \text{ is monotonically increasing on } [0, q_0]
    \end{equation*}
    and
    \begin{equation*}
      ~~~~~\text{\it (2) }
      \Psi(p_0, \cdot \parasep b, l) \text{ is monotonically decreasing on } [p_0, 1]
      \enspace.
    \end{equation*}
  \end{Slem}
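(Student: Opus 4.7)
The plan is to prove both monotonicity statements by direct differentiation of $\Psi$, bypassing Corollary~\ref{cor:psi-indexing}: it suffices to show $\partial_p \Psi(p, q_0 \parasep b, l) \ge 0$ for $p \in [0, q_0]$ to obtain (1), and $\partial_q \Psi(p_0, q \parasep b, l) \le 0$ for $q \in [p_0, 1]$ to obtain (2). The workhorse identity is
\[
u \cdot \frac{b!}{u!\,v!\,w!} \;=\; b \cdot \frac{(b-1)!}{(u-1)!\,v!\,w!}
\]
(and its analogues in $v$ and $w$), which lets a factor of $u$, $v$, or $w$ produced by differentiation be absorbed into the multinomial coefficient and the corresponding summation index shifted down by one.

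For part~(1), applying the product rule to each summand of $\Psi$ yields two contributions, one from $\partial_p p^u = u\, p^{u-1}$ and one from $\partial_p (q-p)^w = -w\,(q-p)^{w-1}$. After the index shifts $u \mapsto u-1$ and $w \mapsto w-1$, respectively, both contributions take the common form $\tfrac{(b-1)!}{u!\,v!\,w!}\, p^u (1-q)^v (q-p)^w$, summed over
\[
\mathcal{A} \;=\; \{(u,v,w) \in \N_0^3 : u+v+w = b-1,\ v \ge 1,\ w \le l-1\}
\]
and
\[
\mathcal{B} \;=\; \{(u,v,w) \in \N_0^3 : u+v+w = b-1,\ u \ge 1,\ v \ge 1,\ w \le l-2\},
\]
respectively. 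The decisive observation is that $\mathcal{B} \subseteq \mathcal{A}$, so the difference collapses to a sum over $\mathcal{A} \setminus \mathcal{B}$, which consists exactly of tuples in $\mathcal{A}$ with either $u = 0$ or $w = l-1$. Since $0 \le p \le q_0 \le 1$, each of $p$, $1-q_0$, and $q_0 - p$ is non-negative, so every surviving summand is non-negative, whence $\partial_p \Psi \ge 0$.

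Part~(2) is entirely parallel: differentiating in $q$ gives a contribution from $\partial_q (1-q)^v$ with a leading minus sign and one from $\partial_q (q-p)^w$ with a leading plus sign. After shifts $v \mapsto v-1$ and $w \mapsto w-1$, the two index sets become $\mathcal{A}' = \{u+v+w = b-1,\ u \ge 1,\ w \le l-1\}$ and $\mathcal{B}' = \{u+v+w = b-1,\ u \ge 1,\ v \ge 1,\ w \le l-2\}$, with $\mathcal{B}' \subseteq \mathcal{A}'$. The remaining sum over $\mathcal{A}' \setminus \mathcal{B}'$ (tuples with $v=0$ or $w=l-1$) is again manifestly non-negative on $\{p_0\} \times [p_0,1]$, but now carries the overall minus sign inherited from the $(1-q)^v$ derivative; hence $\partial_q \Psi \le 0$ throughout $[p_0, 1]$.

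The main obstacle is the index-set bookkeeping needed to confirm the containments $\mathcal{B} \subseteq \mathcal{A}$ and $\mathcal{B}' \subseteq \mathcal{A}'$ after the shifts, and to characterise their complements as exactly the boundary tuples where one of the defining constraints is tight. Once that combinatorial check is done, the sign of every surviving summand is immediate from $0 \le p \le q \le 1$, and no finer structural property of $\Psi$ (such as the alternative representations supplied by Corollary~\ref{cor:psi-indexing}) is required.
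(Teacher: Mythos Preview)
Your argument is correct and takes a genuinely different, more economical route than the paper. The paper first invokes Corollary~\ref{cor:psi-indexing} to rewrite $\Psi$ as a double sum (outer index $n_2$ for part~(1), outer index $n_1$ for part~(2)), then differentiates each inner sum $A_{n_2}(p)$ (resp.\ $A_{n_1}(q)$) and shows that the derivative telescopes: with auxiliary functions $c_{n_1}$ one obtains $\partial_p a_{n_1} = c_{n_1-1} - c_{n_1}$, so the inner derivative collapses to the single boundary term $c_{g(n_2)-1}(p) \ge 0$. Your approach skips the rearrangement lemma entirely and instead differentiates the full sum at once, absorbing the factors $u$ and $w$ (resp.\ $v$ and $w$) into the multinomial coefficient via $u\cdot\tfrac{b!}{u!\,v!\,w!} = b\cdot\tfrac{(b-1)!}{(u-1)!\,v!\,w!}$, then reading off the containment $\mathcal{B} \subseteq \mathcal{A}$ directly from the shifted constraints. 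The cancellation you obtain over $\mathcal{A}\setminus\mathcal{B}$ is exactly the global version of the paper's slice-by-slice telescoping; in fact your surviving boundary tuples ($u=0$ or $w=l-1$) correspond precisely to the terms $c_{g(n_2)-1}$ that remain after the paper's telescope. What you gain is brevity and independence from Corollary~\ref{cor:psi-indexing}; what the paper's version gains is a slightly more explicit closed form for the derivative (one term per outer index rather than a residual sum), which is not needed here since only the sign matters.
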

\begin{proof}
  {\it Ad (1).}
  For brevity, we write $q$ instead of $q_0$.
  First, we leverage Corollary \ref{cor:psi-indexing}, Eq. \eqref{cor:psi-indexing-eq:2},
  and re-arrange the sum: 
  \begin{align*}
  \Psi(p, q \parasep b, l) & =
  \sum\limits_{n_2 = 1}^{b-1}
  \sum\limits_{n_1 = g(n_2)}^{b-n_2}
  \frac{b!}{n_1! n_2! (b-n_1-n_2)!}
  p^{n_1}
  (1-q)^{n_2}
  (q-p)^{b-n_1 - n_2}                  \\
  & =
  \sum\limits_{n_2 = 1}^{b-1}
  (1-q)^{n_2}
  \underbrace{
    \sum\limits_{n_1 = g(n_2)}^{b-n_2}
    \frac{b!}{n_1! n_2! (b-n_1-n_2)!}
    p^{n_1}
    (q-p)^{b-n_1 - n_2}}_{=A_{n_2}(p)}
  \enspace.
  \end{align*}
  For studying the monotonicity properties of $\Psi(\cdot, q \parasep b, l)$, it is sufficient to consider $A_{n_2}$, for $1\leq n_2 \leq b- 1$.
  
  We define two auxiliary functions
  \begin{align*}
  a_{n_1}(p) &= \frac{b!}{n_1! n_2! (b-n_1-n_2)!}
  p^{n_1}
  (q-p)^{b-n_1 - n_2},\\
  c_{n_1}(p) &= \frac{b!}{n_1! n_2! (b-n_1-n_2)!}
  p^{n_1}
  (b-n_1 - n_2)(q-p)^{b-n_1 - n_2 - 1}\enspace.
  \end{align*}
  Note that $A_{n_2}(p) = \sum_{n_1=g(n_2)}^{b-n_2} a_{n_1}(p)$  and  that
  $$
  c_{n_1}(p) = 
  \begin{cases} 
  \frac{b!}{n_1! n_2! (b-n_1-n_2-1)!}
  p^{n_1}
  (q-p)^{b-n_1 - n_2 - 1} \ge 0, & 0 \le n_1 < b - n_2\\
  0, & n_1 = b - n_2  
  \end{cases},
  $$
  because by assumption $0\le p \le q \le 1$.
  As we will show below, 
  $$\frac{\partial a_{n_1}(p)}{\partial p} = c_{n_1 - 1}(p) - c_{n_1}(p).$$
  Hence, $$\frac{\partial A_{n_2}(p)}{\partial p} = \sum\limits_{n_1 = g(n_2)}^{b-n_2} (c_{n_1 - 1}(p) - c_{n_1}(p)) = c_{g(n_2) - 1}(p) - c_{b-n_2}(p) = c_{g(n_2) - 1}(p)\ge0.$$
  Consequently, $A_{n_2}$  is monotonically increasing and thus, so is $\Psi(\cdot, q \parasep b, l)$.
  
  It remains to calculate the derivative $\frac{\partial a_{n_1}(p)}{\partial p}$:
  \begin{align*}
  \frac{\partial a_{n_1}(p)}{\partial p} &
  =  \frac{b!}{n_1! n_2! (b-n_1 - n_2)!}
  \cdot
  \Big[
  n_1 p^{n_1-1}
  (q-p)^{b-n_1 - n_2}
  -
  p^{n_1}
  (b-n_1 - n_2)(q-p)^{b-n_1 - n_2 - 1}
  \Big] \\
  &= 
  \frac{b!}{n_1! n_2! (b-n_1 - n_2)!}
  \cdot 
  n_1 p^{n_1-1}
  (q-p)^{b-n_1 - n_2}\\
  & \qquad -
  \frac{b!}{n_1! n_2! (b-n_1 - n_2)!}
  \cdot
  p^{n_1}
  (b-n_1 - n_2)(q-p)^{b-n_1 - n_2 - 1}  \\
  & = 
    \frac{b!}{(n_1-1)! n_2! (b-n_1 - n_2)!}
  \cdot 
  p^{n_1-1}
  (q-p)^{b-n_1 - n_2}\\
  & \qquad -
  \frac{b!}{n_1! n_2! (b-n_1 - n_2)!}
  \cdot
  p^{n_1}
  (b-n_1 - n_2)(q-p)^{b-n_1 - n_2 - 1}\\
  & = c_{n_1-1}(p) - c_{n_1}(p)
  \end{align*}
  
  {\it Ad 2).}
  The proof is rather similar to the first part.
  Nevertheless, we will exercise it for completeness.
  For brevity, we will write $p$ instead of $p_0$.
  Again we start by leveraging Corollary \ref{cor:psi-indexing}, but, this time, use Eq. \eqref{cor:psi-indexing-eq:1}, and re-arrange the sum,
  \begin{align*}
  \Psi(p, q \parasep b, l) & =
  \sum\limits_{n_1 = 1}^{b-1}
  \sum\limits_{n_2 = g(n_1)}^{b-n_1}
  \frac{b}{n_1! n_2! (b-n_1 - n_2)!}
  p^{n_1}
  (1-q)^{n_2}
  (q-p)^{b-n_1 - n_2}          \\
  & =
  \sum\limits_{n_1 = 1}^{b-1}
  p^{n_1}
  \underbrace{
    \sum\limits_{n_2 = g(n_1)}^{b-n_1}
    \frac{b}{n_1! n_2! (b-n_1 - n_2)!}
    (1-q)^{n_2}
    (q-p)^{b-n_1 - n_2}
  }_{A_{n_1}(q)}
  \end{align*}
  For studying the monotonicity properties of $\Psi(p, \cdot \parasep b, l)$ it is sufficient to consider $A_{n_1}$, for $1\leq n_1 \leq b- 1$.
    
  Again, we define two auxiliary functions
  \begin{align*}
  a_{n_2}(q) &= \frac{b!}{n_1! n_2! (b-n_1-n_2)!}
  (1-q)^{n_2}
  (q-p)^{b-n_1 - n_2},\\
  c_{n_2}(q) &= \frac{b!}{n_1! n_2! (b-n_1-n_2)!}
  (1-q)^{n_2}
  (b-n_1 - n_2)(q-p)^{b-n_1 - n_2 - 1}.
  \end{align*}
  Note that $A_{n_1}(q) = \sum_{n_2=g(n_1)}^{b-n_1} a_{n_2}(q)$  and  that
  $$
  c_{n_1}(q) = 
  \begin{cases}   
  \frac{b!}{n_1! n_2! (b-n_1-n_2-1)!}
  (1-q)^{n_2}
  (q-p)^{b-n_1 - n_2 - 1} \ge 0, & 0\le n_2 < b - n_1 \\
  0, & n_2 = b - n_1\\ 
  \end{cases},
  $$
  because by assumption $0\le p \le q \le 1$.
  
  As we will show below, 
  $$\frac{\partial a_{n_2}(q)}{\partial q} = c_{n_2}(q) - c_{n_2 - 1}(q).$$
  Hence, $$\frac{\partial A_{n_1}(q)}{\partial q} = \sum\limits_{n_2 = g(n_1)}^{b - n_1} (c_{n_2}(q) - c_{n_2-1}(q)) = c_{b-n_1}(q) - c_{g(n_1) - 1}(q) = - c_{g(n_1) - 1}(q)\le0.$$
  Consequently, $A_{n_1}$  is monotonically decreasing and thus, so is $\Psi(p,\cdot, \parasep b, l)$.
  
  It remains to calculate the derivative $\frac{\partial a_{n_2}(q)}{\partial q}$:
  \begin{align*}
  \frac{\partial a_{n_2}}{\partial q}(q)
  & = 
  - \frac{b!}{n_1! n_2! (b-n_1 - n_2)!}
  n_2 (1-q)^{n_2 - 1}
  (q-p)^{b-n_1 - n_2}
    \\
  & \qquad  +
  \frac{b!}{n_1! n_2! (b-n_1 - n_2)!} 
  (1-q)^{n_2}
  (b-n_1 - n_2)(q-p)^{b-n_1 - n_2 - 1} \\
  & =
  - \frac{b!}{n_1! (n_2-1)! (b-n_1 - n_2)!}
  (1-q)^{n_2 - 1}
  (q-p)^{b-n_1 - n_2}
  \\
  & \qquad  +
  \frac{b!}{n_1! n_2! (b-n_1 - n_2)!} 
  (1-q)^{n_2}
  (b-n_1 - n_2)(q-p)^{b-n_1 - n_2 - 1} \\
  & =
  - c_{n_2-1} + c_{n_2}
  \end{align*}
\end{proof}

  \section{Monotonicity properties of $\mcR$}
  \label{app:section:monotonicity}
  \begin{Sdefi}
    \begin{align*}
      \mcR_{b, c_{\beta}}(p, l)
      = \min_q\Big\{
      q \in [p,1]:
      1 - c_{\beta}
      \geq
      \Psi(p, q \parasep b, l)
      \Big\}
    \end{align*}
  \end{Sdefi}
  \begin{Slem}
    $\mcR_{b, c_{\beta}}(p, l)$ is 
    \begin{align*}
      & \text{(1) monotonically increasing in } p, \text{and} \\
      & \text{(2) monotonically increasing in } l
      \enspace.
    \end{align*}
  \end{Slem}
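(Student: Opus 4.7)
The plan is to deduce both monotonicity claims directly from the monotonicity properties of $\Psi$ established earlier in the supplement (property (1): $\Psi(\cdot,q_0\parasep b,l)$ is increasing on $[0,q_0]$; property (2): $\Psi(p_0,\cdot\parasep b,l)$ is decreasing on $[p_0,1]$; property (3): $\Psi(p,q\parasep b,\cdot)$ is increasing, and $\Psi(p,1\parasep b,l)=0$). Before doing so, I would briefly justify that $\mcR_{b,c_\beta}(p,l)$ is well-defined: by property (3), $q=1$ always satisfies $1-c_\beta\ge 0=\Psi(p,1\parasep b,l)$, so the feasible set is non-empty; since $\Psi$ is a polynomial (hence continuous) and monotonically decreasing in $q$ by (2), this feasible set is actually a closed interval $[q^*,1]\subseteq[p,1]$, and the minimum is attained at $q^*$.

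For part (1), monotonicity in $p$, I would fix $l$ and take $p_1\le p_2$. Set $q_2=\mcR_{b,c_\beta}(p_2,l)$, so by definition $q_2\in[p_2,1]$ and $1-c_\beta\ge\Psi(p_2,q_2\parasep b,l)$. The goal is to show $q_2$ is feasible for the minimization defining $\mcR_{b,c_\beta}(p_1,l)$. Domain-wise, $q_2\ge p_2\ge p_1$ is immediate. For the inequality, note that $p_1,p_2\in[0,q_2]$ (since $p_2\le q_2$), so property (1) of $\Psi$ gives $\Psi(p_1,q_2\parasep b,l)\le\Psi(p_2,q_2\parasep b,l)\le 1-c_\beta$. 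Hence $q_2$ lies in the feasible set of the $p_1$-problem, and therefore $\mcR_{b,c_\beta}(p_1,l)\le q_2=\mcR_{b,c_\beta}(p_2,l)$.

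For part (2), monotonicity in $l$, I would fix $p$ and take $l_1\le l_2$. Let $q_2=\mcR_{b,c_\beta}(p,l_2)$; then $q_2\ge p$ and $1-c_\beta\ge\Psi(p,q_2\parasep b,l_2)$. By property (3), $\Psi$ is increasing in $l$, so $\Psi(p,q_2\parasep b,l_1)\le\Psi(p,q_2\parasep b,l_2)\le 1-c_\beta$. Together with $q_2\ge p$, this shows $q_2$ is feasible for the $l_1$-minimization, whence $\mcR_{b,c_\beta}(p,l_1)\le q_2=\mcR_{b,c_\beta}(p,l_2)$.

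Honestly, there is no serious obstacle here: once the monotonicity properties of $\Psi$ are available, both statements reduce to checking that the minimizer of the harder problem remains feasible for the easier one. The only subtlety worth flagging is ensuring that the monotonicity of $\Psi$ in $p$ can be invoked at the point $(p_1,q_2)$; this is fine because $p_1\le p_2\le q_2$ places $p_1$ inside the interval $[0,q_2]$ on which property (1) is stated.
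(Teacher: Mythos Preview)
Your proof is correct and follows essentially the same route as the paper: both deduce the monotonicity of $\mcR$ from the monotonicity of $\Psi$ in $p$ and in $l$ by comparing feasible sets. The only cosmetic difference is direction: the paper fixes the smaller $p$ and shows the feasible set for the larger $\hat p$ is contained in the one for $p$ (which forces a short case split when $\mcR_{b,c_\beta}(p,l)<\hat p$), whereas you take the minimizer $q_2$ for the larger $p_2$ and show it is feasible for $p_1$, which sidesteps that case distinction.
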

  \begin{proof}
    {\it Ad (1).} Let $p, \hat{p} \in [0, 1]$ with $p < \hat{p}$.\\
    First, assume $\mcR_{b, c_{\beta}}(p, l) \in [p, \hat{p}]$.
    Then, by definition,
    $\mcR_{b, c_{\beta}}(\hat{p}, l) \in [\hat{p}, 1]$ and thus
    \[
      \mcR_{b, c_{\beta}}(p, l) \leq \mcR_{b, c_{\beta}}(\hat{p}, l)\enspace.
    \]

    Second, assume $\mcR_{b, c_{\beta}}(p, l) \notin [p, \hat{p}]$, \ie,
    $\mcR_{b, c_{\beta}}(p, l) \in [\hat{p},1]$.
    Let
    \[
      A = \{q \in [\hat{p},1]: 1 - c_{\beta} \geq \Psi(p, q \parasep b, l)\}
    \]
    and
    \[
      ~~~B = \{q \in [\hat{p},1]: 1 - c_{\beta} \geq \Psi(\hat{p}, q \parasep b, l)\}
      \enspace.
    \]
    By Lemma \ref{lem:psi-function-monotony}, $\Psi$ is monotonically increasing in
    $p$ and thus
    \[
      \Psi(\hat{p}, q \parasep b, l) \geq \Psi(p, q \parasep b, l)
      \text{ for }
      q \in [\hat{p}, 1]
      \enspace.
    \]
    This implies $B \subseteq A$ and therefore
    \begin{align*}
      \mcR_{b, c_{\beta}}(p, l) = \min A \leq \min B = \mcR_{b, c_{\beta}}(\hat{p}, l)
      \enspace.
    \end{align*}

    {\it Ad 2).}
    Let $l < \hat{l}$.
    It follows from the definition of $I$, see Definition \ref{def:psi-function}, that
    $I(b, l) \subseteq I(b, \hat{l})$.
    As all addends in the sum defining $\Psi$ are positive, the claim follows.
  \end{proof}
  
  \section{Experimental details}
  \label{app:section:experimentaldetails}
  For reproducibility, we provide full architectural details, optimization settings and 
  hyper-parameters.
  
  \subsection{Architecture}
  On SVHN and CIFAR10, we use the CNN-13 architecture of \citep[][Table 5]{Laine17a}, 
  without the Gaussian noise input layer. The configuration is provided in Table~\ref{tbl:cnn13}
  below (essentially reproduced from the original paper). \texttt{BN} denotes 2D batch normalization 
  \cite{Ioffe15a}, \texttt{LReLU} denotes leaky ReLU activation with $\alpha=0.1$.
  
  \begin{table}[H]
  \begin{small}
  \centering{
  \begin{tabular}{llcc}
  \toprule
  & & \texttt{BN} & \texttt{LReLU} \\
  \midrule
  \textbf{Input}    & $32 \times 32$ RGB image & & \\
  \midrule
  \texttt{Conv (2D)}  & \texttt{Filters: 128; Kernel: 3x3; Pad: 1} & \checkmark & \checkmark \\
  \texttt{Conv (2D)}  & \texttt{Filters: 128; Kernel: 3x3; Pad: 1} & \checkmark & \checkmark \\
  \texttt{Conv (2D)}  & \texttt{Filters: 128; Kernel: 3x3; Pad: 1} & \checkmark & \checkmark \\
  \texttt{MaxPool (2D)} & \texttt{Window: 2x2, Stride: 2, Pad: 0}    & - & -\\
  \texttt{Dropout (0.5)}& & & \\
  \texttt{Conv (2D)}  & \texttt{Filters: 256; Kernel: 3x3; Pad: 1} & \checkmark & \checkmark \\
  \texttt{Conv (2D)}  & \texttt{Filters: 256; Kernel: 3x3; Pad: 1} & \checkmark & \checkmark \\
  \texttt{Conv (2D)}  & \texttt{Filters: 256; Kernel: 3x3; Pad: 1} & \checkmark & \checkmark \\
  \texttt{MaxPool (2D)} & \texttt{Window: 2x2, Stride: 2, Pad: 0}    & - & -\\
  \texttt{Dropout (0.5)}& & & \\
  \texttt{Conv (2D)}  & \texttt{Filters: 512; Kernel: 3x3; Pad: 0} & \checkmark & \checkmark \\
  \texttt{Conv (2D)}  & \texttt{Filters: 256; Kernel: 1x1; Pad: 0} & \checkmark & \checkmark \\
  \texttt{Conv (2D)}  & \texttt{Filters: 128; Kernel: 1x1; Pad: 0} & \checkmark & \checkmark \\
  \texttt{AvgPool (2D)} & \texttt{Window: 6x6, Stride: 2, Pad: 0}    & - & -\\
  \midrule
  \multicolumn{4}{c}{\emph{Vectorize to} $z \in \mathbb{R}^{128}$ (this is where the 
  topological regularizer operates)}\\
  \midrule
  \texttt{FullyConn.} & \texttt{128 -> 10} & - & -  \\
  \bottomrule
  \end{tabular}}
  \caption{\label{tbl:cnn13} CNN-13 architecture. The network part up to the vectorization 
  operation constitutes $\varphi$.}
  \end{small}
  \end{table}
  
  On MNIST, we use a simpler CNN architecture, listed in Table~\ref{table:mnistnet}.  
  
  \begin{table}[H]
  \centering{
    \begin{small}
  \begin{tabular}{llcc}
  \toprule
  & & \texttt{BN} & \texttt{LReLU} \\
  \midrule
  \textbf{Input}    & $28 \times 28$ grayscale image & & \\
  \midrule
  \texttt{Conv (2D)}  & \texttt{Filters: 8; Kernel: 3x3; Pad: 1}   & \checkmark & \checkmark \\
  \texttt{MaxPool (2D)} & \texttt{Window: 2x2, Stride: 2, Pad: 0}    & - & -\\
  \texttt{Conv (2D)}  & \texttt{Filters: 32; Kernel: 3x3; Pad: 1}    & \checkmark & \checkmark \\
  \texttt{MaxPool (2D)} & \texttt{Window: 2x2, Stride: 2, Pad: 0}    & - & -\\
  \texttt{Conv (2D)}  & \texttt{Filters: 64; Kernel: 3x3; Pad: 1}    & \checkmark & \checkmark \\
  \texttt{MaxPool (2D)} & \texttt{Window: 2x2, Stride: 2, Pad: 0}    & - & -\\
  \texttt{Conv (2D)}  & \texttt{Filters: 128; Kernel: 3x3; Pad: 1}   & \checkmark & \checkmark \\
  \texttt{MaxPool (2D)} & \texttt{Window: 2x2, Stride: 2, Pad: 0}    & - & -\\
  \midrule
  \multicolumn{4}{c}{\emph{Vectorize to} $z \in \mathbb{R}^{128}$ (this is where the 
  topological regularizer operates)}\\
  \midrule
  \texttt{FullyConn.} & \texttt{128 -> 10} & - & -  \\
  \bottomrule
  \end{tabular}
    \end{small}}
  \caption{\label{table:mnistnet} MNIST CNN architecture.}
  \end{table}
  
  \subsection{Optimization \& Augmentation}
  
  For optimization, we use SGD with momentum (set to 0.9). As customary in the literature \citep[see e.g.,][]{Verma19a}, training images
  for CIFAR10 and SVHN are augmented by (1) zero-padding images by 2 pixel on each side, followed by 
  random cropping of a $32 \times 32$ region and (2) random horizontal flipping (with probability 0.5).  
  On MNIST, no augmentation is applied. All images are further normalized by subtracting the mean and 
  dividing by the standard deviation. Importantly, these statistics are computed for each cross-validation
  split separately (from the training instances), as this is the only practical choice in a small sample-size regime. 
 
  \subsection{Hyper-parameter settings}
  
  Except for the last row, Table~\ref{tbl:sota} lists the \emph{best achievable error} for different regularization approaches 
  over a hyper-parameter grid to establish a \emph{lower bound} on the obtainable error.
  
  Across all experiments, weight decay on $\varphi$ is fixed to $1\mathrm{e}{-3}$. Due to the use of batch normalization, this 
  primarily affects the effective learning rate \citep[see][]{Laarhoven17a,Zhang19a}. On MNIST, we fix the initial learning 
  rate to 0.1. On SVHN and CIFAR10, we additionally experimented with an initial learning rate of 
  $0.3$ and $0.5$ and include these in 
  our hyper-parameter grid. The learning rate is annealed following the cosine learning rate annealing proposed
  in \cite{Loshchilov17a}.
  
  \textbf{Regularization}. The hyper-parameter grid is constructed as follows: \emph{weight decay} on $\gamma$ is varied in  
  $\{1\mathrm{e}{-3}, 5\mathrm{e}{-4},1\mathrm{e}{-4}  \}$. For \emph{Jacobian regularization} \cite{Hoffman19a},  the weighting of the regularization term is varied in $\{1\mathrm{e}{-3}, 0.05, 0.01, 0.1\}$. For \emph{DeCov} \cite{Cogswell16a}, \emph{VR} and \emph{cw-CR/VR} 
  \cite{Choi19a}, weighting of the regularization term is varied in $\{1\mathrm{e}{-4}, 1\mathrm{e}{-3}, 0.01, 0.1\}$.
  All these different choices are evaluated over 10 cross-validation runs with exactly the same training/testing split configuration.
  
  \textbf{Topological regularization}. To evaluate the sub-batch construction in combination with our proposed
  topological regularizer, the initial learning rate on MNIST is fixed to 0.1 and 0.5 for SVHN and CIFAR10. With
  topological regularization enabled, this always produced stable results. Importantly, weight decay for $\gamma$ is fixed to to $0.001$, except for the CIFAR10-1k experiment, where we set it to 
  $5\mathrm{e}{-4}$. \emph{The lowest achievable error is thus only selected by varying $\beta$} in $[0.1,1.9]$. 
  
  To obtain 
  the last row of Table~\ref{tbl:sota}, we no longer sweep over $\beta$, but select $\beta$ via cross-validation 
  over held-out validation sets of size 250 on SVHN and MNIST, and 500/1,000 on CIFAR10, respectively.
  
  \vskip2ex
  \textcolor{purple3}{\emph{The full, \texttt{PyTorch}-compatible, source code will be made publicly available at \url{https://github.com/c-hofer/topologically_densified_distributions}.}}

\end{adjustwidth}
\end{document}